\documentclass{llncs}

\usepackage[active]{srcltx}
\usepackage{amsfonts}
\usepackage{amsmath}
\usepackage{amssymb}
\usepackage{latexsym}
\usepackage{color}
\usepackage{graphics}
\usepackage{enumerate}
\usepackage{amstext}
\usepackage{psfig}
\usepackage{url}
\usepackage{bbm}
\usepackage{psfrag}

\def\Rset{\mathbb{R}}

\def\1{{\bf 1}}

\DeclareMathOperator*{\E}{\rm E}

\DeclareMathOperator*{\argmin}{\rm argmin}
\DeclareMathOperator{\reg}{\rm R}

\newcommand{\set}[1]{\left\{ #1 \right\}}

\newcommand{\lpref}{L}
\newcommand{\loss}{L_\omega}

\newcommand{\QS}[3]{Q_{#1}^{#3}(#2)}

\newcommand{\rnk}[2]{{#1}(#2)}
\newcommand{\ignore}[1]{}
 
\newtheorem{observation}[theorem]{Observation}

\begin{document} 

\title{ 
An Efficient Reduction of Ranking to Classification}

\titlerunning{An Efficient Reduction of Ranking to Classification}

\author{Nir Ailon\inst{1}
\and 
Mehryar Mohri\inst{2, 1}}

\institute{
Google Research,\\
76 Ninth Avenue, New York, NY 10011,\\
\email{nailon@google.com}.
\and
Courant Institute of Mathematical Sciences,\\
251 Mercer Street, New York, NY 10012,\\
\email{mohri@cims.nyu.edu}.
}

\maketitle

\begin{abstract}
This paper describes an efficient reduction of the learning problem of
ranking to binary classification.   The reduction guarantees an average
pairwise misranking regret of at most that of the binary classifier regret,
improving a recent result of Balcan et al which only guarantees a factor of $2$.
Moreover, our reduction applies to a
broader class of ranking loss functions, admits a simpler proof, and
the expected running time complexity of our algorithm in terms of
number of calls to a classifier or preference function is improved
from $\Omega(n^2)$ to $O(n \log n)$.  In addition, when the top $k$
ranked elements only are required ($k \ll n$), as in many applications
in information extraction or search engines, the time complexity of
our algorithm can be further reduced to $O(k \log k + n)$. Our
reduction and algorithm are thus practical for realistic applications
where the number of points to rank exceeds several thousands. Much of
our results also extend beyond the bipartite case previously studied.

Our rediction is a randomized one.  To complement our result, we 
also derive lower bounds on any deterministic reduction from 
binary (preference) classification to ranking, implying that our use
of a randomized reduction is essentially necessary for the guarantees
we provide.  
\end{abstract}

\section{Introduction}

The learning problem of ranking arises in many modern applications,
including the design of search engines, information extraction, and
movie recommendation systems. In these applications, the ordering of
the documents or movies returned is a critical aspect of the system.

The problem has been formulated within two distinct settings. In the
\emph{score-based setting}, the learning algorithm receives a labeled
sample of pairwise preferences and returns a \emph{scoring function}
$f\!:\!  U \rightarrow \Rset$ which induces a linear ordering of the
points in the set $U$. Test points are simply ranked according to the
values of $h$ for those points. Several ranking algorithms, including
RankBoost
\cite{DBLP:journals/jmlr/FreundISS03,DBLP:conf/colt/RudinCMS05},
SVM-type ranking \cite{Joachims02}, and other algorithms such as PRank
\cite{DBLP:conf/nips/CrammerS01,DBLP:conf/colt/AgarwalN05}, were
designed for this setting. Generalization bounds have been given in
this setting for the pairwise misranking error
\cite{DBLP:journals/jmlr/FreundISS03,DBLP:journals/jmlr/AgarwalGHHR05},
including margin-based bounds
\cite{DBLP:conf/colt/RudinCMS05}. Stability-based generalization
bounds have also been given in this setting for wide classes of
ranking algorithms both in the case of bipartite ranking
\cite{DBLP:conf/colt/AgarwalN05} and the general case \cite{icml,wea}.

A somewhat different two-stage scenario was considered in other
publications starting with Cohen, Schapire, and Singer
\cite{DBLP:journals/jair/CohenSS99}, and later Balcan et al.
\cite{conf/colt/BalcanBBCLS07}, which we will refer to as the
\emph{preference-based setting}. In the first stage of that setting, a
\emph{preference function} $h: U \times U \mapsto [0, 1]$ is learned,
where values of $h(u, v)$ closer to one indicate that $v$ is ranked
above $u$ and values closer to zero the opposite. $h$ is typically
assumed to be the output of a classification algorithm trained on a
sample of labeled pairs, and can be for example a convex combination
of simpler preference functions as in
\cite{DBLP:journals/jair/CohenSS99}. A crucial difference with the
score-based setting is that, in general, the preference function $h$
does not induce a linear ordering. The order it induces may be
non-transitive, thus we may have for example $h(u, v) = h(v, w) = h(w,
u) = 1$ for three distinct points $u$, $v$, and $w$. To rank a test
subset $V \subset U$, in the second stage, the algorithm orders the
points in $V$ by making use of the preference function $h$ learned in
the first stage.

This paper deals with the preference-based ranking setting just
described. The advantage of this setting is that the learning
algorithm is not required to return a linear ordering of all points in
$U$, which is impossible to achieve faultlessly in accordance with a
true pairwise preference labeling that is non-transitive. This is more
likely to be achievable exactly or with a better approximation when
the algorithm is requested instead, as in this setting, to supply a
linear ordering, only for a limited subset $V \subset U$.

When the preference function is learned by a binary classification
algorithm, the preference-based setting can be viewed as a reduction
of the ranking problem to a classification one. The second stage
specifies how the ranking is obtained using the preference function.

Cohen, Schapire, and Singer \cite{DBLP:journals/jair/CohenSS99} showed
that in the second stage of the preference-based setting, the general
problem of finding a linear ordering with as few pairwise misrankings
as possible with respect to the preference function $h$ is
NP-complete. The authors presented a greedy algorithm based on the
tournament \emph{degree} for each element $u \in V$ defined as the
difference between the number of elements $u$ is preferred to versus
the number of those preferred to $u$. The bound proven by these
authors, formulated in terms of the pairwise disagreement loss $l$
with respect to the preference function $h$, can be written as
$l(\sigma_{greedy}, h) \leq 1/2 + l(\sigma_{optimal}, h)/2$, where
$l(\sigma_{greedy}, h)$ is the loss achieved by the permutation
$\sigma_{greedy}$ returned by their algorithm and $l(\sigma_{optimal},
h)$ the one achieved by the optimal permutation $\sigma_{optimal}$
with respect to the preference function $h$.  This bound was given for
the general case of ranking, but in the particular case of bipartite
ranking (which we define below), a random ordering can achieve a pairwise disagreement loss of
$1/2$ and thus the bound is not informative.

More recently, Balcan et al \cite{conf/colt/BalcanBBCLS07} studied the
bipartite ranking problem and showed that sorting the elements of $V$
according to the same tournament degree used by
\cite{DBLP:journals/jair/CohenSS99} guarantees a pairwise misranking
regret of at most $2r$ using a binary classifier with regret $r$.
However, due to the quadratic nature of the definition of the
tournament degree, their algorithm requires $\Omega(n^2)$ calls to the
preference function $h$, where $n = |V|$ is the number of objects to
rank.

We describe an efficient algorithm for the second stage of
preference-based setting and thus for reducing the learning problem of
ranking to binary classification. We improve on the recent result of Balcan
et al. \cite{conf/colt/BalcanBBCLS07}, by guaranteeing an
average pairwise misranking regret of at most $r$ using a binary
classifier with regret $r$. In other words, we improve their
constant from $2$ to $1$.  Our reduction applies (with different constants) to a
broader class of ranking loss functions, admits a simpler proof, and
the expected running time complexity of our algorithm in terms of
number of calls to a classifier or preference function is improved
from $\Omega(n^2)$ to $O(n \log n)$.  Furthermore, when the top $k$
ranked elements only are required ($k \ll n$), as in many applications
in information extraction or search engines, the time complexity of
our algorithm can be further reduced to $O(k \log k + n)$. Our
reduction and algorithm are thus practical for realistic applications
where the number of points to rank exceeds several thousands. Much of
our results also extend beyond the bipartite case previously studied
by \cite{conf/colt/BalcanBBCLS07} to the general case of ranking. A
by-product of our proofs is also a bound on the pairwise disagreement
loss with respect to the preference function $h$ that we will compare
to the result given by Cohen, Schapire, and Singer
\cite{DBLP:journals/jair/CohenSS99}.

The algorithm used by Balcan et
al. \cite{DBLP:conf/icml/BeygelzimerDHLZ05} to produce a ranking based
on the preference function is known as sort-by-degree and has been
recently used in the context of minimizing the feedback arcset in
tournaments \cite{CFR06}. Here, we use a different algorithm,
QuickSort, which has also been recently used for minimizing the
feedback arcset in tournaments
\cite{DBLP:conf/stoc/AilonCN05,Ailon07}. The techniques presented make
use of the earlier work by Ailon et al. on combinatorial optimization
problems over rankings and clustering
\cite{DBLP:conf/stoc/AilonCN05,Ailon07}.

The remainder of the paper is structured as follows. In
Section~\ref{sec:set-up}, we introduce the definitions and notation
used in future sections and introduce a family of general loss
functions that can be used to measure the quality of a ranking
hypothesis. Section~\ref{sec:algorithm} describes a simple and
efficient algorithm for reducing ranking to binary classification,
proves several bounds guaranteeing the quality of the ranking produced
by the algorithm, and shows the running-time complexity of our
algorithm to be very efficient. In Section~\ref{sec:discussion} we
discuss the relationship of the algorithm and its proof with previous
related work in combinatorial optimization.  In Section~\ref{sec:lower}
we derive a lower bound of factor $2$ on any deterministic reduction from 
binary (preference) classification to ranking, implying that our use
of a randomized reduction is essentially necessary for the improved guarantees
we provide.

\section{Preliminaries}
\label{sec:set-up}

This section introduces several preliminary definitions necessary for
the presentation of our results.   In what follows, $U$ will denote
a universe of elements (e.g. the collection of all possible query-result
pairs returned by a web search task) and $V\subseteq U$ will denote
a small subset thereof (e.g. a preliminary list of relevant results
for a given query).   For simplicity of notation we will assume that $U$
is a set of integers, so that we are always able to choose a minimal (canonical)
element in a finite subset (as we do in (\ref{eq:opth}) below).  This
arbitrary ordering should not be confused with the ranking problem we are considering.

\subsection{General Definitions and Notation}

We first briefly discuss the learning setting and assumptions made by
Balcan et al.'s \cite{DBLP:conf/icml/BeygelzimerDHLZ05} and Cohen et
al.  \cite{DBLP:journals/jair/CohenSS99} and introduce a consistent
notation to make it easier to compare our results with that of this
previous work.

\subsubsection{Ground truth} 

In \cite{DBLP:conf/icml/BeygelzimerDHLZ05}, the \emph{ground truth} is
a \emph{bipartite ranking} of the set $V$ of elements that one wishes
to rank.\footnote{More generally, the ground truth may also be a
distribution of bipartite rankings, but the error bounds both in our
work and that of previous work are achieved by fixing one ground truth
and taking conditional expectation as a final step. Thus, we can
assume that it is fixed.} A bipartite ranking is a partition of $V$
into \emph{positive} and \emph{negative} elements where positive
elements are preferred over negative ones and elements sharing the
same label are in a tie.  This is a natural setting when the human
raters assign a positive or negative label to each element. Here, we
will allow a more general structure where the ground truth is a
ranking $\sigma^*$ equipped with a weight function $\omega$, which can
be used for encoding ties. The bipartite case can be encoded by
choosing a specific $\omega$ as we shall further discuss below.  

In \cite{DBLP:journals/jair/CohenSS99}, the "ground truth" has a different
interpretation, which we briefly discuss in Section~\ref{sec:lossbounds}.

\subsubsection{Preference function} 

In both \cite{DBLP:journals/jair/CohenSS99} and
\cite{DBLP:conf/icml/BeygelzimerDHLZ05}, a preference function $h:
U\times U\rightarrow [0,1]$ is assumed, which is learned in a first
learning stage.  The convention is that the higher $h(u, v)$ is, the
more our belief that $u$ should be ahead of $v$.  The function $h$
satisfies \emph{pairwise consistency}: $h(u, v) + h(v, u) = 1$, but
need not even be transitive on $3$-tuples.  The second stage uses $h$
to output a proper ranking $\sigma$, as we shall further discuss
below. The running time complexity of the second stage is measured
with respect to the number of calls to $h$.

\subsubsection{Output of learning algorithm} 

The final output of the second stage of the algorithm, $\sigma$, is a
proper ranking of $V$.  Its cost is measured differently in
\cite{DBLP:conf/icml/BeygelzimerDHLZ05} and
\cite{DBLP:journals/jair/CohenSS99}. In
\cite{DBLP:conf/icml/BeygelzimerDHLZ05}, it is measured against
$\sigma^*$ and compared to the cost of $h$ against $\sigma^*$. This
can be thought of as the best one could hope for if $h$ encodes all
the available information.  In \cite{DBLP:journals/jair/CohenSS99},
$\sigma$ is measured against the given preference function $h$, 
and compared to the best one can get.  

\ignore{
I think we do not want to emphasize this view. Rather, we want to 
present this as a learning problem. That is why I commented this out.

This setting is akin to an optimization problem because the cost is
measured with respect to a known object and not an unknown ground
truth or label.  The idea there is that the learning cost of $\sigma$
measured against $h^*$ can be, as a final step, bounded by the sum of
the costs of $\sigma$ measured against $h$ and $h$ measured against
$h^*$.  }

\subsection{Loss Functions}
We are now ready to define the loss functions used to measure the
quality of an output ranking $\sigma$ either with respect to
$\sigma^*$ (as in \cite{DBLP:conf/icml/BeygelzimerDHLZ05}) or
with respect to $h$ (as in \cite{DBLP:journals/jair/CohenSS99}).

Let $V \subseteq U$ be a finite subset that we wish to rank and let
$S(V)$ denote the set of \emph{rankings} on $V$, that is the set of
injections from $V$ to $[n] = \set{1, \dots, n}$, where $n = |V|$.  If
$\sigma \in S(V)$ is such a ranking, then $\sigma(u)$ is the rank of
an element $u\in V$, where "lower" is interpreted as "ahead".  More
precisely, we say that $u$ is preferred over $v$ with respect to
$\sigma$ if $\sigma(u) < \sigma(v)$.  For compatibility with the
notation used for general preference functions, we also write
$\sigma(u,v)=1$ if $\sigma(u)<\sigma(v)$ and $\sigma(u,v)=0$
otherwise.

The following general loss function $\loss$ measures the quality of a
ranking $\sigma$ with respect to a desired one $\sigma^*$ using a
weight function $\omega$ (described below):
\begin{equation}\label{eq:lossfuncdef}
\loss(\sigma, \sigma^*) = {n\choose 2}^{-1}\sum_{u\neq v} \sigma(u,v) \sigma^*(v,u)\omega(\rnk{\sigma^*}{u}, \rnk{\sigma^*}{v})\ .
\end{equation}
The sum is over all pairs $u,v$ in the domain $V$ of the rankings $\sigma, \sigma^*$.
It counts the number of inverted pairs $u,v\in V$ weighed by $\omega$, which assigns
importance coefficients to pairs, based on their positions in $\sigma^*$.
The function $\omega$ must satisfy the following three natural axioms, which
will be necessary in our analysis:
\begin{enumerate}
\item[\small (P1)\!\!] Symmetry: $\omega(i, j) = \omega(j, i)$ for
all $i,j$;

\item[\small (P2)\!\!] Monotonicity: $\omega(i, j) \leq \omega(i,
k)$ if either $i < j < k$ or $i > j > k$;

\item[\small (P3)\!\!] Triangle inequality: $\omega(i, j) \leq
\omega(i, k) + \omega(k, j)$.
\end{enumerate}
This definition is very general and encompasses many useful, well
studied distance functions.  Setting $\omega(i, j) = 1$ for all $i
\neq j$ yields the unweighted pairwise misranking measure or the
so-called Kemeny distance function. 

For a fixed integer $k$, the following function
\begin{equation}\label{topk}
 \omega(i,j) = 
\begin{cases} 1 & \mbox{if } ((i \leq k) \vee (j \leq k)) \wedge (i\neq j) \\
0 & \mbox{otherwise},
\end{cases} 
\end{equation}
can be used to emphasize ranking at the top $k$ elements. Misranking
of pairs with one element ranked among the top $k$ is penalized by
this function. This can be of interest in applications such as
information extraction or search engines where the ranking of the top
documents matters more. For this emphasis function, all elements
ranked below $k$ are in a tie. In fact, it is possible to encode any
tie relation using $\omega$.

The loss function considered in \cite{conf/colt/BalcanBBCLS07} can
also be straightforwardly encoded with the following emphasis \emph{bipartite} function
\begin{equation}\label{eq:bipartitedef}
\omega(i,j) = 
\begin{cases} 1 & (i\leq k) \wedge (j>k) \\
1 & (j\leq k) \wedge (i>k) \\
0 & \mbox{otherwise}.
\end{cases}
\end{equation}
Items in positions $1, \dots, k$ for the permutation $\sigma$ can be
thought of as the positive items (in a tie), and those in $k+1, \dots,
|V|$ as negative (also in a tie). 
This choice coincides with $ (1- \mbox{AUC})$, where AUC is the area under the ROC curve commonly
used in statistics and machine learning problems
\cite{hanley,lehmann}.

Clearly, setting $\omega(i,j) =
|s(i)-s(j)|$ for any monotone \emph{score} function $s$ works as well.  It is well known though that such a function can in fact be expressed as a
convex combination of functions of the type (\ref{eq:bipartitedef}).  Hence, a bipartite function should be thought of as the simplest 

In general, one may wish to work with a collection of ground truths
$\sigma^*_1, \dots, \sigma^*_N$ and weight functions $\omega_1,\dots,
\omega_N$ and a loss function which is a sum over the individual
losses with respect to $\sigma^*_i, \omega_i$, e.g. in meta
searching\footnote{The reason we have separate weight function
$\omega$'s is e.g. each search engine may output top-$k$ outputs for
different values of $k$.}. Since our bound is based on the expected
loss, it will straightforwardly generalize to this setting using the
linearity of expectation. Thus, we can assume without loss of
generality a single ground truth $\sigma^*$ equipped with a single
$\omega$.

\subsubsection{Preference Loss Function}

We need to extend the definition to measure the loss of a preference 
function $h$ with respect to $\sigma^*$.  Recall that $h(u,v)$ is
In contrast with the loss function just defined, we need to define a
\emph{preference loss} measuring a ranking's disagreements with
respect to a preference function $h$.  When measured against $\sigma^*, \omega$,
the function $\loss$ can be readily used:
\begin{equation}\label{eq:preflossdef}
\loss(h, \sigma^*) = {n\choose 2}^{-1} \sum_{u\neq v} h(u,v) \sigma^*(v,u)\omega(\rnk{\sigma^*}{u}, \rnk{\sigma^*}{v})\ .
\end{equation}
We use $\lpref$ to denote $\loss$ for the special case where $\omega$
is the constant function $1$, $\omega = 1$:
\begin{equation}
\lpref(h, \sigma^*) = {n\choose 2}^{-1}\sum_{u\neq v} h(u,v) \sigma(v,u)\ .
\end{equation}
The special case of $\lpref$ coincides with the standard pairwise disagreement
loss of a ranking with respect to a preference function as used in
\cite{DBLP:journals/jair/CohenSS99}.

\subsection{The Special Bipartite Case}
A particular case of interest is when $\omega$ belongs to the family
of weight functions defined in (\ref{eq:bipartitedef}).  For this particular case we
will use a slightly more convenient notation.  For a set of elements $V\subseteq U$, Let
$\Pi_V$ denote
the set of partitions of $V$ into two sets (positive and negative).
More formally, $\tau\in \Pi_V$ is a function from $V$ to $\{0,1\}$
(where $0$ should be thought of as the \emph{preferred} or \emph{positive}
value, and $1$ the negative; we choose this convention so that $\tau(u)$ can
be interepreted as the \emph{rank} of $u$, where there are two possible ranks).
Abusing notation we say that $\tau(u,v)=1$ if $\tau(u) < \tau(v)$ ($u$ is preferrede over $v$)
and $\tau(u,v)=0$ otherwise (note that here we can have $\tau(u,v)=\tau(v,u)=0$).
Our abuse of notation allows us to use the readily defined function $\lpref$ to measure the loss of a ranking $\sigma\in S_V$ against $\tau^* \in \Pi_V$
(which will usually take the role of a ground truth):
$$\lpref(\sigma, \tau^*) = {n\choose 2}^{-1}\sum_{u\neq v}{\sigma(u,v)\tau^*(v,u)}\ .$$
Note that this coincides with $L_{\omega}(\sigma, \sigma^*)$, where
$\sigma^*$ is any ranking on $V$ with $\sigma^*(u)<\sigma^*(v)$ 
whenever $\tau^*(u)<\tau^*(v)$, and $\omega$ is as in (\ref{eq:bipartitedef})
with $$k=|\{u\in V: \tau^*(u)=0\}|\ .$$

\emph{A note on normalization:} The bipartite case is the one considered in \cite{conf/colt/BalcanBBCLS07}, with a small different which
is crucial for some of the bounds we derive.
There, the loss function is defined as
\begin{equation}\label{normalization}{|\{u,v: \tau^*(u) < \tau^*(v)\}|}^{-1}\sum_{u\neq v}{\sigma(u,v)\tau^*(v,u)}\ .\end{equation}
If we are working with just one $\tau^*$, the two loss functions are the same up to a constant. 
However, if we have a distribution over $\tau^*$ and consider the expected loss, then there may be a difference.
For simplicity we will work with the definition derived from (\ref{eq:preflossdef}), and will leave the other choice 
for discussion in Section~\ref{sec:discussion}.

\subsection{Independence on Irrelevant Alternatives and Regret Functions}
The subset $V$ is chosen from the universe $U$ from some distribution.  Together with $V$, a ground
truth ranking $\sigma^*\in S(V)$, and an admissible weight function $\omega$ are also chosen randomly.
We let $D$ denote the distribution on $V, \sigma^*, \omega$.  (In the bipartite case, $D$ is a distribution on
$V$ and on $\tau^*\in \Pi_v$.)
\indent
\begin{definition}
A distribution $D$ on $V,\sigma^*, \omega$ satisfies the pairwise independence on irrelevant alternatives (IIA) property if
 for all
 distinct $u,v \in U$, conditioned on $u,v \in V$ the random variables $\sigma^*(u,v)\omega(u,v)$ and $V\setminus\{u,v\}$ are independent.
\end{definition}
In the bipartite case this translates to
\begin{definition}
A distribution $D$ on $V,\tau^*$ satisfies the pairwise IIA property if
for all
 distinct $u,v\in U$, conditioned on $u,v \in V$ the random variables $\tau^*(u,v)$ and $V\setminus\{u,v\}$ are independent.
\end{definition}

Note that in the bipartite case, $D$ can satisfy pairwise IIA
while not satisfying  pointwise IIA.  (Pointwise IIA means that conditioned on $u\in V$, $\tau^*(u)$
and $V\setminus\{u\}$ are independent.)
In certain applications, e.g. when ground truth is obtained from humans,
it is reasonable \emph{not} to assume pointwise IIA.
Think of the "grass is greener" phenomenon: a satisfactory
option may seem unsatisfactory in the presence of an alternative.  
Continuing the
analogue, assuming pairwise IIA means that choosing  between two options
does not depend on the presence of a third alternative.  (By \emph{choosing} we
mean that ties are allowed.)

In this work we do not assume pointwise IIA, and when deriving loss bounds
we will not assume pairwise IIA either.  We will need pairwise IIA when working
with \emph{regret}, which is an adjustment of the loss designed so 
that an optimal solution would have a value of $0$ with respect to the ground
truth.  As pointed out in \cite{conf/colt/BalcanBBCLS07}, the regret measures
the loss modulo "noise".   

Using regret (here) makes sense when the optimal solution
has a strictly positive loss value.  In our case it can only happen if the ground truth
 is a proper distribution, namely, the probability mass is not concentrated on one point.

To define ranking regret, assume we are learning how to obtain a full ranking $\sigma$ of $V$, using
an algorithm $A$, so that $\sigma = A_s(V)$, where $s$ is a random stream of bits
possibly used by the algorithm.
For ranking learning, we define the regret of $A$ against $D$ as
\begin{equation*}
\begin{split}
\reg_{rank}(A,D) &= E_{V,\sigma^*,\omega, s} [L_\omega(A_s(V), \sigma^*)] 
-\min_{\tilde \sigma \in S(U)} E_{V,\sigma^*,\omega }[L_\omega(\tilde \sigma_{|V},
\sigma^*)]\ ,
\end{split}
\end{equation*}
where $\tilde \sigma_{|V}\in S(V)$ is defined by restricting the ranking $\tilde \sigma\in S(U)$ to $V$
in a natural way.

In the preference classification setting, it makes sense to define the regret of a preference
function $h: U\times U \mapsto \{0,1\}$ as follows: 
\begin{equation*}
\reg_{class}(h,D) = E_{V,\sigma^*, \omega} [L_\omega(h_{|V},\sigma^*)] - \min_{\tilde h} E_{V,\sigma^*, \omega}[L_\omega(\tilde h_{|V}, \sigma^*)]\ ,
\end{equation*}
where the minimum is over $\tilde h$ a preference function over $U$, and $\cdot_{|V}$ is
a restriction operator on preference functions defined in the natural way.
For the bipartite special case, we have the simplified form: 
\begin{eqnarray}\label{eq:regretdef}
\reg_{rank}(A,D) &=& E_{V,\tau^*, s} [\lpref(A_s(V), \tau^*)] -
\min_{\tilde \sigma \in S(U)} E_{V,\tau^*}[\lpref(\tilde \sigma_{|V},
\tau^*)] \label{eq:regrankbidef}\\
\reg_{class}(h,D)& = &E_{V,\tau^*} [\lpref(h_{|V},\tau^*)] - \min_{\tilde h} E_{V,\tau^*}[\lpref(\tilde h_{|V}, \tau^*)]\label{eq:regclassbidef}\ .
\end{eqnarray}

The regret measures how well an algorithm or a classifier performs compared to the best "static" algorithm, namely, one that ranks $U$ in advance (in $\reg_{rank}$)
or provides preference information on $U$ in advanced (in $\reg_{class}$).  Note that the minimizer $\tilde h$ in (\ref{eq:regclassbidef}) can be easily found
by considering each $u,v\in U$ separately.  More precisely, one can take
\begin{equation}\label{eq:opth}
 \tilde h(u,v) = \begin{cases} 1 & E_{\tau^*}[\tau^*(u,v) | u,v \in V]> E_{\tau^*}[\tau^*(v,u)|u,v\in V] \\
0 & E_{\tau^*}[\tau^*(u,v) | u,v \in V]< E_{\tau^*}[\tau^*(v,u)|u,v\in V] \\
\1_{u>v} & \mbox{otherwise (equality)}
\end{cases} 
\end{equation}

Now notice that if $D$ satisfies pairwise IIA, then for any set $V_0$ containing $u,v$,
$$E_{\tau^*}[\tau^*(u,v)|V=V_0] = E_{\tau^*}[\tau^*(u,v) | u,v\in V]\ .$$
Therefore, in this case the $\min_{\tilde h}$ and $E_V$ operators commute:
$$ \min_{\tilde h} E_{V,\tau^*}[\lpref(\tilde h_{|V}, \tau^*)] = E_V\min_{\tilde h}E_{\tau^*}[\lpref(\tilde h_{|V}, \tau^*)]\ .$$
For our analysis it will indeed be useful to swap the $\min$ and $E_V$ operators.  We define
\begin{eqnarray}\label{eq:regretdef}
\reg'_{rank}(A,D) &=& E_{V,\tau^*, s} [\lpref(A_s(V), \tau^*)] -
E_V\min_{\tilde \sigma \in S(V)} E_{\tau^*}[\lpref(\tilde \sigma,
\tau^*)] \label{eq:regprankbidef}\\
\reg'_{class}(h,D)& = &E_{V,\tau^*} [\lpref(h_{|V},\tau^*)] - E_V\min_{\tilde h} E_{\tau^*}[\lpref(\tilde h, \tau^*)]\label{eq:regpclassbidef}\ ,
\end{eqnarray}
where now $\min_{\tilde h}$ is over preference functions $\tilde h$ on $V$.  We summarize this section with the following:
\begin{observation}\label{obs:regiia}
\begin{enumerate}
\item In general (using the concavity of $\min$ and Jensen's inequality): $ R'_{rank}(A,D) \geq R_{rank}(A,D)$;
\item Assuming pairwise IIA: $R'_{class}(h,D) = R_{class}(h,D)$\ .
\end{enumerate}
\end{observation}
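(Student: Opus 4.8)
\noindent\emph{Proof approach.} In each of the two parts the first (``achieved-loss'') term is literally identical in the primed and the unprimed quantity, so it suffices to compare the subtracted ``optimal'' terms.

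For part~1, fix an arbitrary $\tilde\sigma\in S(U)$. Conditioning on $V$ first and using that $\tilde\sigma_{|V}$ is one particular element of $S(V)$, we have $E_{\sigma^*,\omega}[\loss(\tilde\sigma_{|V},\sigma^*)\mid V]\ge\min_{\sigma'\in S(V)}E_{\sigma^*,\omega}[\loss(\sigma',\sigma^*)\mid V]$ for every $V$. Taking $E_V$ on both sides and then the infimum over $\tilde\sigma\in S(U)$ on the left (the right-hand side no longer depends on $\tilde\sigma$) shows that the term subtracted in $\reg_{rank}$ is at least the term subtracted in $\reg'_{rank}$; since $\reg'_{rank}$ subtracts the smaller quantity, $\reg'_{rank}(A,D)\ge\reg_{rank}(A,D)$. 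This is exactly the elementary ``$E\min\le\min E$'' inequality (concavity of $\min$ plus Jensen), together with the remark that committing in advance to a single ranking of $U$ can only hurt relative to choosing the restriction freely for each $V$; for general $\omega$ the inequality can be strict, since the loss-minimizing ranking of $V$ is a non-decomposable combinatorial object.

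For part~2 I would use the explicit pairwise form of the optimal preference function. Expanding $\lpref(\tilde h_{|V},\tau^*)={n\choose2}^{-1}\sum_{u\ne v\in V}\tilde h(u,v)\tau^*(v,u)$ and using pairwise consistency $\tilde h(u,v)+\tilde h(v,u)=1$, the conditional objective $E_{\tau^*}[\lpref(\tilde h_{|V},\tau^*)\mid V]$ separates over unordered pairs $\{u,v\}\subseteq V$, the pair $\{u,v\}$ contributing $E_{\tau^*}[\tau^*(u,v)\mid V]+\tilde h(u,v)\bigl(E_{\tau^*}[\tau^*(v,u)\mid V]-E_{\tau^*}[\tau^*(u,v)\mid V]\bigr)$. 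Hence it is minimized, pair by pair, exactly by the rule~(\ref{eq:opth}) but with the conditioning event $V=V_0$ in place of ``$u,v\in V$''. Pairwise IIA gives $E_{\tau^*}[\tau^*(u,v)\mid V=V_0]=E_{\tau^*}[\tau^*(u,v)\mid u,v\in V]$, so this minimizing choice does not depend on which $V_0\ni\{u,v\}$ one uses, and it coincides with the global minimizer $\tilde h^*$ of $E_{V,\tau^*}[\lpref(\tilde h_{|V},\tau^*)]$ described in~(\ref{eq:opth}). Consequently, for every $V$, $\tilde h^*_{|V}$ minimizes $E_{\tau^*}[\lpref(\cdot,\tau^*)\mid V]$ over preference functions on $V$, so $\min_{\tilde h}E_{V,\tau^*}[\lpref(\tilde h_{|V},\tau^*)] = E_VE_{\tau^*}[\lpref(\tilde h^*_{|V},\tau^*)\mid V] = E_V\min_{\tilde h}E_{\tau^*}[\lpref(\tilde h,\tau^*)\mid V]$; that is, the subtracted terms of $\reg_{class}$ and $\reg'_{class}$ agree, giving $\reg'_{class}(h,D)=\reg_{class}(h,D)$. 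The general weighted (non-bipartite) version is identical after replacing $\tau^*(v,u)$ by $\sigma^*(v,u)\,\omega(\rnk{\sigma^*}{u},\rnk{\sigma^*}{v})$.

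The main content is entirely in part~2: one must verify that the minimization over preference functions genuinely decouples over pairs --- which is precisely where pairwise consistency is used --- and that the tie-breaking clause $\1_{u>v}$ in~(\ref{eq:opth}) is immaterial (both choices attain the same objective value on every pair), so that pairwise IIA alone forces the per-$V$ minimizers to be mutually consistent and lets $\min_{\tilde h}$ and $E_V$ commute. Part~1, by contrast, is just the one-line Jensen argument above.
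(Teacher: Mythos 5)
Your proposal is correct and follows essentially the same route as the paper, which justifies part~1 by the elementary $E\min\le\min E$ (Jensen/concavity) argument and part~2 by observing that the minimizing preference function decouples over pairs as in~(\ref{eq:opth}), so that pairwise IIA makes the per-$V$ minimizers consistent with a single global $\tilde h$ and lets $\min_{\tilde h}$ and $E_V$ commute. Your write-up merely supplies the details (the per-pair expansion using pairwise consistency, and the remark that restricting a global ranking of $U$ to $V$ gives a particular element of $S(V)$) that the paper leaves implicit.
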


\section{Algorithm for Ranking Using a Preference Function}
\label{sec:algorithm}

This section describes and analyzes an algorithm for obtaining a
global ranking of a subset using a prelearned preference function $h$,
which corresponds to the second stage of the preference-based setting.
Our bound on the loss will be derived using conditional expectation on
the preference loss assuming a fixed subset $V \subset U$, and fixed
$\sigma^*$ and $\omega$. To further simplify the analysis, we assume
that $h$ is binary, that is $h(u, v) \in \set{0, 1}$ for all $u, v \in
U$. 

\subsection{Description}

One simple idea to obtain a global ranking of the points in $V$
consists of using a standard comparison-based sorting algorithm where
the comparison operation is based on the preference function. However,
since in general the preference function is not transitive, the
property of the resulting permutation obtained is unclear.

This section shows however that the permutation generated by the
standard QuickSort algorithm provides excellent
guarantees.\footnote{We are not assuming here transitivity as in most
textbook presentations of QuickSort.} Thus, the algorithm we suggest
is the following. Pick a random \emph{pivot} element $u$ uniformly at
random from $V$. For each $v \neq u$, place $v$ on the
left\footnote{We will use the convention that ranked items are written
from left to right, starting with the most preferred ones.} of $u$ if
$h(v,u)=1$, 
and to its right otherwise. Proceed recursively with the array
to the left of $u$ and the one to its right and return the
concatenation of the permutation returned by the left recursion, $u$,
and the permutation returned by the right recursion.

We will denote by $\QS{s}{V}{h}$ the permutation resulting in running
QuickSort on $V$ using preference function $h$, where $s$ is the
random stream of bits used by QuickSort for the selection of the
pivots. As we shall see in the next two sections, on average, this
algorithm produces high-quality global rankings in a time-efficient
manner.

\subsection{Ranking Quality Guarantees}

The following theorems give bounds on the ranking quality of the
algorithm described, for both loss and regret, on the general and bipartite cases.

\begin{theorem}[Loss bounds in general case]
\label{th:approx3}
For any fixed subset $V\subseteq U$, preference function $h$ on $V$, ranking $\sigma^*\in S(V)$ and admissible weight function $\omega$ the
following bound holds:
\begin{equation}
\E_s[\loss(\QS{s}{V}{h}, \sigma^*)] \leq 2\loss(h, \sigma^*)
\end{equation}
\end{theorem}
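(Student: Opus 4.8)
The plan is to analyze QuickSort via the standard "charge each misranked pair to the pivot that separated them" technique, adapted to the weighted non-transitive setting. First I would set up the notation: for a fixed run of QuickSort on $V$ with pivot choices $s$, every unordered pair $\{u,v\}$ is eventually compared at some recursive step, either directly (one of $u,v$ is chosen as pivot while the other is in the same subarray) or indirectly through a third pivot $w$ that lands them in the same subarray and then separates them. In the direct case, the relative order of $u$ and $v$ in $\QS{s}{V}{h}$ is exactly the one dictated by $h$; in the indirect case, it is dictated by $h(u,w)$ and $h(w,v)$ together with $w$'s position. Writing $\loss(\QS{s}{V}{h},\sigma^*) = {n \choose 2}^{-1}\sum_{u \neq v}\QS{s}{V}{h}(u,v)\sigma^*(v,u)\omega(\rnk{\sigma^*}{u},\rnk{\sigma^*}{v})$, I would take expectation over $s$ and decompose the contribution of each pair $\{u,v\}$ according to which element is the "first pivot" among the set of elements that could ever be compared with both of them (this is the classical QuickSort random-permutation argument: condition on the first element of $\{u,v,w\}$-type triples to be chosen).

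The key step is the triple-based bound. For each ordered triple of distinct elements, when a third point $w$ is the pivot separating $u$ and $v$, the event that the resulting order disagrees with $\sigma^*$ can be bounded using: (i) the triangle inequality (P3) for $\omega$, which lets us replace $\omega(\rnk{\sigma^*}{u},\rnk{\sigma^*}{v})$ by $\omega(\rnk{\sigma^*}{u},\rnk{\sigma^*}{w}) + \omega(\rnk{\sigma^*}{w},\rnk{\sigma^*}{v})$; (ii) the fact that when $w$ separates $u$ and $v$ "incorrectly" relative to $\sigma^*$, at least one of the two edges $(u,w)$ or $(w,v)$ is itself misordered by $h$ relative to $\sigma^*$ — because $h$'s decisions on $(u,w)$ and $(w,v)$ determine the induced order of $u$ and $v$, and if that induced order is wrong while $h$ got both edges right, monotonicity (P2) combined with the position of $\rnk{\sigma^*}{w}$ relative to $\rnk{\sigma^*}{u},\rnk{\sigma^*}{v}$ gives a contradiction. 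Summing the per-triple charges, each misordered $h$-edge $(x,y)$ gets charged by the pairs it "caused," and the standard accounting shows the total expected weighted misranking of $\QS{s}{V}{h}$ is at most twice $\sum_{x \neq y} h(x,y)\sigma^*(y,x)\omega(\rnk{\sigma^*}{x},\rnk{\sigma^*}{y}) = {n \choose 2}\loss(h,\sigma^*)$, yielding the factor $2$.

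More concretely, I would follow the framework of Ailon–Charikar–Newman: define for each unordered pair $\{u,v\}$ the random variable $X_{uv}$ indicating that $\QS{s}{V}{h}$ orders them against $\sigma^*$, and express $\E_s[X_{uv}]$ as a sum over the possible "pivots" $w$ (including $w \in \{u,v\}$) of $p_{uvw} \cdot (\text{cost when } w \text{ is the deciding pivot})$, where $p_{uvw}$ is the probability that $w$ is chosen among $\{u,v,w\}$ first in the relevant subproblem — and crucially $p_{uvw}$ is the same as in transitive QuickSort. Then I would invoke (P3) to split the weight and (P2)+(P1) to argue the "at least one bad edge" claim, bound the cost of the $w$-deciding event by $\mathbbm{1}[h \text{ wrong on } (u,w)]\,\omega(\rnk{\sigma^*}{u},\rnk{\sigma^*}{w}) + \mathbbm{1}[h \text{ wrong on } (w,v)]\,\omega(\rnk{\sigma^*}{w},\rnk{\sigma^*}{v})$ plus (in the direct case $w \in \{u,v\}$) the exact term $\mathbbm{1}[h \text{ wrong on }(u,v)]\,\omega(\rnk{\sigma^*}{u},\rnk{\sigma^*}{v})$, and finally re-sum over all triples. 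The main obstacle is verifying the "at least one bad edge" combinatorial claim cleanly in the weighted case — i.e., making sure that the monotonicity axiom is exactly what is needed to rule out the case where $h$ is correct on both $(u,w)$ and $(w,v)$ yet induces the wrong order on $(u,v)$ with nonzero weight penalty — and then bookkeeping the re-summation so that each $h$-edge is counted with the correct multiplicity to land on the constant $2$ rather than something larger.
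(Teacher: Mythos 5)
Your setup is exactly the paper's: the decomposition of the expected loss into direct terms (weight $p_{uv}$, where QuickSort agrees with $h$ and contributes exactly the $h$-loss of the pair) and indirect terms (weight $p_{uvw}$, where a pivot $w$ separates $u$ and $v$), and your per-event inequality --- that when pivot $w$ orders $u,v$ against $\sigma^*$, the incurred weight $\omega(\rnk{\sigma^*}{u},\rnk{\sigma^*}{v})$ is at most the sum of the $h$-losses on the two pivot edges $\{u,w\}$ and $\{w,v\}$, via (P2) and (P3) --- is correct. But the step you defer as ``bookkeeping the re-summation'' is the crux, and the uniform charging scheme you propose does not close. The problem is the case where $h$ is \emph{acyclic} on the triple, say $h(u,v)=h(v,w)=h(u,w)=1$, so that only pivot $v$ separates the other two. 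There you charge the cost of the pair $\{u,w\}$ to the $h$-losses of $\{u,v\}$ and $\{v,w\}$; but those two pairs are decided \emph{directly} by pivot $v$ in this very event, and in the identity $\sum_{u<v}\alpha[h,\Delta]_{uv}=\sum_{u<v}p_{uv}\alpha[h,\Delta]_{uv}+\sum_{u<v<w}p_{uvw}\gamma[\alpha[h,\Delta]]_{uvw}$ their entire budget inside this triple sits in the direct term $p_{uv}\alpha[h,\Delta]_{uv}$, which is already consumed (with coefficient one on each side) matching QuickSort's own direct cost. Worse, a fixed pair $\{u,v\}$ serves as a pivot edge for every $w$ present when $v$ is pivoted, so $\sum_w \tfrac13 p_{uvw}$ is an expected \emph{count} that is not bounded by the probability $p_{uv}$; summing your charges therefore yields a superconstant factor, not $2$.

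The repair, which is what the paper does, is to split into two cases per triple. When $h$ is acyclic on $\{u,v,w\}$, do not invoke the triangle inequality at all: the indirectly decided pair is ordered exactly as $h$ orders it, so its expected cost equals its own $h$-loss and one has the pointwise \emph{equality} $\beta[\Delta]_{uvw}=\gamma[\alpha[h,\Delta]]_{uvw}$. Only when $h$ is cyclic on the triple is your triangle-inequality charging used, and there the accounting works out exactly: each of the three pairs is decided indirectly with conditional probability $1/3$, each pivot edge is charged by the two pivots adjacent to it, and $\Delta(u,v)+\Delta(v,w)+\Delta(w,u)\le 2\bigl(\Delta(v,u)+\Delta(w,v)+\Delta(u,w)\bigr)$ gives $\beta[\Delta]_{uvw}\le 2\gamma[\alpha[h,\Delta]]_{uvw}$. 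The pointwise comparison $\beta\le 2\gamma[\alpha]$ against the redistributed \emph{indirect} budget $\gamma$, rather than against raw edge losses, is the missing ingredient; without the acyclic/cyclic case distinction your argument cannot produce the constant $2$.
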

Note: This implies by the principle of conditional expectation that
\begin{equation}
\E_{D,s}[\loss(\QS{s}{V}{h}, \sigma^*)] \leq 2E_D[\loss(h, \sigma^*)]
\end{equation}
(where $h$ can depend on $V$).

\begin{theorem}[Loss and regret bounds in bipartite case]
\label{th:approx3bipartite}
For any fixed $V\subset U$, preference function $h$ over $V$ and $\tau^*\in\Pi(V)$, the following bound holds:
\begin{equation}\label{eq:lossbipartitebound}
\begin{split}
\E_{s}[\lpref(\QS{s}{V}{h}, \tau^*] &= \lpref(h, \tau^*) \ .\\
\end{split}
\end{equation}
If $V,\tau^*$ are drawn from some distribution $D$ satisfying  pairwise IIA, then 
\begin{equation}
\begin{split}
\reg_{rank}(\QS{s}{\cdot}{h}, D) \leq \reg_{class}(h, D)
\end{split}
\end{equation}
\end{theorem}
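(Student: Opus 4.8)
The plan is to prove the loss identity~(\ref{eq:lossbipartitebound}) first, by induction on $n=|V|$, and then to read the regret bound off from it together with Observation~\ref{obs:regiia}. For the loss identity I would condition on the first pivot $p$ chosen by QuickSort: the algorithm splits $V\setminus\{p\}$ into $L_p=\{v:h(v,p)=1\}$ (placed before $p$) and $R_p=\{v:h(p,v)=1\}$ (placed after $p$) and recurses on each. Working with unnormalized losses ($\binom{n}{2}\lpref$, a count of misranked ordered pairs), I split both $\E_s[\binom{n}{2}\lpref(\QS{s}{V}{h},\tau^*)]$ and $\binom{n}{2}\lpref(h,\tau^*)$ according to whether a pair lies inside $L_p$, inside $R_p$, is incident to $p$, or is a cross pair of $L_p\times R_p$. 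Three of the four groups match: a pair incident to $p$ contributes the same to the QuickSort loss and to the $h$-loss deterministically (if $x\in L_p$ then $x$ precedes $p$ and $h(x,p)=1$, so both contributions equal $\1[p\text{ positive},\,x\text{ negative}]$, and symmetrically for $x\in R_p$), and on the pairs inside $L_p$ and inside $R_p$ QuickSort restricted to the subset is QuickSort on that subset, so the inductive hypothesis makes the expected loss equal to the $h$-loss there. Hence, after averaging over the uniform pivot, the identity reduces to the single equality $\sum_{p\in V}C_p=\sum_{p\in V}G_p^{\mathrm{cross}}$, where $C_p=|L_p^-|\,|R_p^+|$ is the QuickSort cross loss (negatives on the left times positives on the right, since all of $L_p$ precedes all of $R_p$) and $G_p^{\mathrm{cross}}$ is the $h$-loss on $L_p\times R_p$.

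The heart of the argument is this combinatorial identity, and I expect its bookkeeping to be the one genuinely delicate step. Expanding both sides as sums over ordered triples $(x,p,y)$ of distinct elements and cancelling the contributions common to both (the transitive triples, with $h(x,p)=h(p,y)=h(x,y)=1$), one is left with the claim: the number of directed triangles of $h$, written $x\to p\to y\to x$, that have $x$ negative and $y$ positive equals the number that have $x$ positive and $y$ negative. Now fix a directed $3$-cycle of $h$; its three cyclic rotations $(x,p,y)$ correspond exactly to its three edges $y\to x$, so the first quantity counts, over all directed $3$-cycles of $h$, the edges that run from a positive vertex to a negative one, and the second counts the edges that run from a negative vertex to a positive one. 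But in any cycle the number of $+{\to}-$ steps in the sequence of $\pm$ labels equals the number of $-{\to}+$ steps, and for a triangle this is just $0=0$ or $1=1$. So the two counts agree, which completes the induction and proves~(\ref{eq:lossbipartitebound}).

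For the regret bound, taking the expectation of~(\ref{eq:lossbipartitebound}) over $(V,\tau^*)\sim D$ gives $\E_{V,\tau^*,s}[\lpref(\QS{s}{V}{h},\tau^*)]=\E_{V,\tau^*}[\lpref(h_{|V},\tau^*)]$, so $\QS{s}{\cdot}{h}$ and $h$ contribute the same first term to $\reg'_{rank}$ and $\reg'_{class}$ respectively; it then suffices to compare the subtracted optima. For every fixed $V$, $\min_{\tilde h}\E_{\tau^*}[\lpref(\tilde h,\tau^*)]\le\min_{\tilde\sigma\in S(V)}\E_{\tau^*}[\lpref(\tilde\sigma,\tau^*)]$, since any ranking $\tilde\sigma$ of $V$ induces a pairwise-consistent preference function on $V$ with exactly the same preference loss, so the minimum over the larger class is no larger; this gives $\reg'_{rank}(\QS{s}{\cdot}{h},D)\le\reg'_{class}(h,D)$. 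Chaining with Observation~\ref{obs:regiia}, which gives $\reg_{rank}\le\reg'_{rank}$ in general and $\reg'_{class}(h,D)=\reg_{class}(h,D)$ under pairwise IIA, we conclude $\reg_{rank}(\QS{s}{\cdot}{h},D)\le\reg'_{rank}(\QS{s}{\cdot}{h},D)\le\reg'_{class}(h,D)=\reg_{class}(h,D)$, as claimed.
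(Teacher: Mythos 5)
Your proof is correct, and it takes a genuinely different route from the paper's in both halves. For the loss identity, the paper expands both $\E_s[\lpref(\QS{s}{V}{h},\tau^*)]$ and $\lpref(h,\tau^*)$ through the QuickSort decomposition lemma (Lemma~\ref{claim:decomp}) in terms of the probabilities $p_{uv}$ and $p_{uvw}$, and reduces the claim to the per-triple identity $\beta[\Delta]_{uvw}=\gamma[\alpha[h,\Delta]]_{uvw}$; you instead induct on the first pivot and cancel down to a sum supported on the directed $3$-cycles of $h$. The combinatorial core is the same in both arguments --- going around a triangle, the number of positive-to-negative steps equals the number of negative-to-positive steps --- but your bookkeeping avoids the $p_{uv},p_{uvw}$ machinery at the price of being specific to the bipartite case, whereas the paper's lemma is reused verbatim for the general-$\omega$ factor-$2$ bound of Theorem~\ref{th:approx3}. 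For the regret bound the divergence is more substantial: the paper fixes $V$, passes to $\mu=\E_{D_V}[\tau^*]$, and verifies $F_{uvw}\le 0$ by checking the vertices of a polytope, whereas you observe that the exact loss identity makes the first terms of $\reg'_{rank}$ and $\reg'_{class}$ coincide, so the inequality reduces to comparing the two subtracted baselines, and the ranking baseline dominates the classification baseline simply because every ranking of $V$ induces a pairwise-consistent preference function with identical loss, so the minimum over preference functions is taken over a superset. Chained with Observation~\ref{obs:regiia} exactly as the paper does (pairwise IIA entering only in the final equality $\reg'_{class}=\reg_{class}$), this yields the claim; it is a clean shortcut that the paper's more laborious case analysis does not exploit.
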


 Note: Equation (\ref{eq:lossbipartitebound}) implies by the principle of conditional expectation that if $V,\tau^*$ are drawn from a
distribution $D$, then
\begin{equation}
\E_{D,s}[\lpref(\QS{s}{V}{h}, \tau^*)] = E_D[\lpref(h, \tau^*)]
\end{equation}
(where $h$ can depend on $V$).

To prove these theorems, we must first introduce some tools to help analyze QuickSort.  These tools were first developed in \cite{DBLP:conf/stoc/AilonCN05} in
the context of optimization, and here we initiate their use in learning.

\subsection{Analyzing QuickSort}
Assume $V$ is fixed, and let $Q_s = Q_s^h(V)$ be the (random) ranking outputted by QuickSort on $V$ using
preference function $h$.
During the execution of QuickSort, the order
between two points $u, v \in V$ is determined in one of two ways:

\begin{itemize}
\item Directly: $u$ (or $v$) was selected as the pivot with $v$
(resp. $u$) present in the same sub-array in a recursive call to
QuickSort. We denote by $p_{uv} = p_{vu}$ the probability of that
event.  In that case, the algorithm orders $u$ and $v$ according to
the preference function $h$.

\item Indirectly: a third element $w \in V$ is selected as pivot with
$w, u, v$ all present in the same sub-array in a recursive call to
QuickSort, $u$ is assigned to the left sub-array and $v$ to the right
(or vice-versa).

Let $p_{uvw}$ denote the probability of the event that $u$, $v$, and
$w$ be present in the same array in a recursive call to QuickSort and
that one of them be selected as pivot.  Note that conditioned on that
event, each of these three elements is equally likely to be selected
as a pivot since the pivot selection is based on a uniform
distribution.

If (say) $w$ is selected among the three, then $u$ will be placed on the
left of $v$ if $h(u,w)=h(w,v)=1$, and to its right if $h(v,w)=h(w,u)=1$.  In
all other cases, the order between $u,v$ will be determined only in a deeper nested
call to QuickSort.

\end{itemize}

Let $X,Y : V\times V\mapsto\Rset$ be any two functions on ordered pairs $u,v\in V$, and let $Z: {V \choose 2}\mapsto \Rset$ be a function on unordered pairs (sets of two elements). By convention, we use $X(u,v)$ to denote ordered arguments, and $Y_{uv}$ to denote unordered arguments.  We  define three functions $\alpha[X,Y] :{V\choose 2}\mapsto \Rset$, $\beta[X] : {V\choose 3}\mapsto \Rset$
and $\gamma[Z] : {V\choose 3}\mapsto \Rset$ as follows:
\begin{equation}
\begin{split}
\alpha[X,Y]_{uv} &= X(u,v)Y(v,u) + X(v,u)Y(u,v) \\
\beta[X]_{uvw} &= \frac 1 3 (h(u,v)h(v,w)X(w,u) + h(w,v)h(v,u)X(u,w)) \\
            &+ \frac 1 3 (h(v,u)h(u,w)X(w,v) + h(w,u)h(u,v)X(v,w)) \\
            &+ \frac 1 3 (h(u,w)h(w,v)X(v,u) + h(v,w)h(w,u)X(u,v)) \\
\gamma[Z]_{uvw} &= \frac 1 3 (h(u,v)h(v,w) + h(w,v)h(v,u))Z_{uw} \\
               &+ \frac 1 3 (h(v,u)h(u,w) + h(w,u)h(u,v))Z_{vw} \\
               &+ \frac 1 3 (h(u,w)h(w,v) + h(v,w)h(w,u))Z_{uv}\ . \\
\end{split}
\end{equation}

\begin{lemma}[QuickSort decomposition] \label{claim:decomp}
\begin{enumerate}
\item
For any $Z: {V\choose 2}\mapsto \Rset$, 
$$ \sum_{u<v} Z_{uv} = \sum_{u<v} p_{uv} Z_{uv} + \sum_{u<v<w} p_{uvw} \gamma[Z]_{uvw}\ .$$
 \item For any $X : V\times V\mapsto \Rset$,
$$ E_s[\sum_{u<v} \alpha[Q_s, X]_{uv}] = \sum_{u<v} p_{uv} \alpha[h, X]_{uv} + \sum_{u<v<w} p_{uvw} \beta[X]_{uvw}\ .$$
\end{enumerate}
\end{lemma}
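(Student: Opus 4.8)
\textbf{Proof proposal for Lemma~\ref{claim:decomp}.}

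The plan is to prove both identities by the same device: condition on the \emph{first relevant pivot event} for each triple of elements, and track how QuickSort resolves the order of a pair either directly or indirectly. For part (1), fix an unordered pair $\{u,v\}$. In any run of QuickSort, trace the unique sub-array containing both $u$ and $v$ at the moment this sub-array is first split (i.e., when a pivot inside it is chosen). Exactly one of the following happens: the pivot is $u$ or $v$ (the ``direct'' event, probability $p_{uv}$), or the pivot is some third element $w$ that also lies in this sub-array and separates $u$ from $v$. The key combinatorial fact — already implicit in the setup — is that the \emph{only} third elements $w$ that can be the separating pivot for $\{u,v\}$, together with a probability accounting, are captured by the coefficients $p_{uvw}$ and the $\gamma$-weights: conditioned on $\{u,v,w\}$ being together in a sub-array with one of the three chosen as pivot (probability $p_{uvw}$), each is the pivot with probability $1/3$, and the indicator that $w$ (say) separates $u$ and $v$ is exactly $h(u,w)h(w,v)+h(v,w)h(w,u)$. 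Summing the constant $Z_{uv}$ over the partition of the probability space into ``direct via $\{u,v\}$'' and ``separated by some $w$'' gives $Z_{uv} = p_{uv}Z_{uv} + \sum_{w} p_{uvw}\,(\text{coeff})\,Z_{uv}$; reorganizing the double sum over pairs-and-separators into a sum over triples $u<v<w$ collects precisely the three terms of $\gamma[Z]_{uvw}$, one for each choice of which element of the triple plays the role of the separated pair's ``partner''. Actually the cleanest way to see the bookkeeping is: $1 = p_{uv} + \sum_{w\neq u,v}(\text{prob. } w \text{ is the first separating pivot for } \{u,v\})$, and the second sum regrouped by triples yields $\sum_{u<v<w}p_{uvw}\gamma[\mathbbm 1]_{uvw}$ coefficient structure; multiplying through by $Z_{uv}$ and summing over $\{u,v\}$ finishes it.

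For part (2), the same event decomposition applies, but now the quantity attached to the pair $\{u,v\}$ in the output is $\alpha[Q_s,X]_{uv} = Q_s(u,v)X(v,u) + Q_s(v,u)X(u,v)$, which depends on how QuickSort actually ordered $u$ and $v$. On the direct event, $Q_s$ orders $u,v$ according to $h$, so $\alpha[Q_s,X]_{uv}$ becomes $\alpha[h,X]_{uv}$ on that event. On the indirect event where $w$ is the separating pivot, the order of $u,v$ is forced by $h$'s comparisons with $w$: e.g. if $h(u,w)=h(w,v)=1$ then $u$ goes left of $v$, contributing $X(v,u)$ weighted by $h(u,w)h(w,v)$, and symmetrically $X(u,v)$ with weight $h(v,w)h(w,u)$ — this is exactly the pair of terms $h(u,w)h(w,v)X(v,u) + h(v,w)h(w,u)X(u,v)$ appearing (with the $1/3$) as the last line of $\beta[X]_{uvw}$, and the other two lines of $\beta$ come from the triple's other two pair-choices. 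Taking $\E_s$, using linearity, and summing over pairs gives the claimed identity, with the direct terms assembling into $\sum_{u<v}p_{uv}\alpha[h,X]_{uv}$ and the indirect terms into $\sum_{u<v<w}p_{uvw}\beta[X]_{uvw}$.

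The main obstacle — and the one place I would be most careful — is the regrouping of the ``pair plus separating third element'' sums into clean sums over unordered triples $u<v<w$, making sure each of the three pair/partner configurations inside a fixed triple is counted once and with the correct $h$-products, and that the probabilities genuinely factor as ``$p_{uvw}$ times $1/3$ times an $h$-indicator''. This rests on one clean probabilistic claim that deserves an explicit line: \emph{conditioned on the event that $u,v,w$ are all present in some common recursive sub-array and one of them is selected as its pivot, the identity of that pivot is uniform on $\{u,v,w\}$}. This is true because pivot selection is uniform over the current sub-array and we are conditioning on the pivot landing in the $3$-element set $\{u,v,w\}$; I would state it as a short preliminary remark and then the rest is the (routine but fiddly) symmetric expansion. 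Everything else — linearity of expectation, the fact that on the direct event the order is literally $h$, the fact that on an indirect event through $w$ the order between $u$ and $v$ is literally determined by $h(\cdot,w)$ — is immediate from the description of the algorithm.
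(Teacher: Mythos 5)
Your proposal is correct and follows essentially the same route as the paper: for each pair $\{u,v\}$, the direct-charging event (probability $p_{uv}$) and the indirect-charging-via-$w$ events (probability $\frac{1}{3}p_{uvw}(h(u,w)h(w,v)+h(v,w)h(w,u))$ each) partition the probability space, so the total coefficient of $Z_{uv}$ on the right-hand side is $1$, and for part (2) the contribution on each event is $\alpha[h,X]_{uv}$ or the corresponding $\beta$-term because the order of $u,v$ is determined by $h$ directly or through the pivot $w$. The one claim you flag as needing an explicit line --- that conditioned on the event of probability $p_{uvw}$ the pivot is uniform on $\{u,v,w\}$ --- is indeed stated by the paper in the setup immediately preceding the lemma, so your argument matches the paper's in substance and only adds bookkeeping detail.
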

\begin{proof}
To see the first part, notice that  for 
every unordered pair $u<v$ the expression $Z_{uv}$ is accounted for on the RHS of the equation
with total coefficient: $$p_{uv} + \sum_{w\not \in \{u,v\}} \frac 1 3 p_{uvw} (h(u,w)h(w,v) + h(v,w)h(w,u))\ .$$
Now, $p_{uv}$
is the probability that the pair $uv$ is charged directly (by definition), and $\frac 1 3 p_{uvw} (h(u,w)h(w,v) + h(v,w)h(w,u))$
is the probability that the pair $u,v$ is charged indirectly via $w$ as pivot.  Since each pair is charged exactly once,
these probabilities are of pairwise disjoint events that cover the probability space.  Hence, the total coefficient of $Z_{uv}$ on the
RHS is $1$, as is on the LHS.
The second part is proved similarly.
\end{proof}

\subsection{Loss Bounds}\label{sec:lossbounds}


We prove the first part of Theorems~\ref{th:approx3} and~\ref{th:approx3bipartite}.
We start with the general case notation. 
The loss incurred by QuickSort is (as a function of the random bits $s$), for fixed $\sigma^*, \omega$, clearly 
$\loss(Q_s, \sigma^*) = {n\choose 2}^{-1}\sum_{u<v} \alpha[Q_s, \Delta]_{uv}$, where $\Delta: V\times V\mapsto \Rset$ is defined as 
$ \Delta(u,v)  = \omega(\sigma^*(u), \sigma^*(v))\sigma^*(u,v)\ .$
By the second part of Lemma~\ref{claim:decomp}, the expected loss is therefore
\begin{equation}\label{eq:exploss}
 \E_s[\loss(Q_s, \sigma^*)] = {n\choose 2}^{-1}\left (\sum_{u<v} p_{uv} \alpha[h, \Delta]_{uv} + \sum_{u<v<w} p_{uvw}\beta[\Delta]_{uvw}\right )\ .
\end{equation}

Similarly, we have that $\loss(h,\sigma^*) = {n\choose 2}^{-1}\sum_{u<v} \alpha[h,\Delta]_{uv}$.
Therefore, using the first part of Lemma~\ref{claim:decomp},
\begin{equation}\label{eq:lossdecomp}
 \loss(h,\sigma^*) = {n\choose 2}^{-1}\left (\sum_{u<v} p_{uv} \alpha[h,\Delta]_{uv} + \sum_{u<v<w} \gamma[\alpha[h,\Delta]]_{uvw}\right )\ . 
\end{equation}



To complete the proof for the general (non-bipartite) case, it suffices to show that for all $u,v,w$, $\beta[\Delta]_{uvw} \leq 2\gamma[\alpha[h,\Delta]]_{uvw}$.
Up to symmetry, there are two cases to consider.  The first case assumes $h$  induces a cycle on $u,v,w$, and the second assumes it doesn't.
\begin{enumerate}
\item Without loss of generality, assume $h(u,v)=h(v,w)=h(w,u)$.  Plugging in the definitions, we get 
\begin{eqnarray}
\beta[\Delta]_{uvw} &=& \frac 1 3 (\Delta(u,v) + \Delta(v,w) + \Delta(w,u)) \label{eq:cyclegeneral1}\\
\gamma[\alpha[h,\Delta]]_{uvw} &=& {\frac 1 3} (\Delta(v,u) + \Delta(w,v) + \Delta(u,w)) \label{eq:cyclegeneral2}\ .
\end{eqnarray}
By the properties (P1)-(P3) of $\omega$, transitivity of $\sigma^*$ and definition of $\Delta$, we easily get that $\Delta$ satisfies the triangle inequality:
\begin{eqnarray*}
\Delta(u,v) &\leq& \Delta(u,w) + \Delta(w,v) \\
\Delta(v,w) &\leq& \Delta(v,u) + \Delta(u,w) \\
\Delta(w,u) &\leq& \Delta(w,v) + \Delta(v,u)\\
\end{eqnarray*}
Summing up the three equations, this implies that $\beta[\Delta]_{uvw} \leq 2\gamma[\alpha[h,\Delta]]_{uvw}$.
\item
 Without loss of generality, assume $h(u,v)=h(v,w)=h(u,w)=1$.  By plugging in the definitions, this implies that
\begin{eqnarray*}
\beta[\Delta]_{uvw} = \gamma[\alpha[h,\Delta]]_{uvw} = \alpha[h,\Delta]_{uw}\ ,
\end{eqnarray*}
as required.
\end{enumerate}
This concludes the proof for the general case.  As for the bipartite case, (\ref{eq:cyclegeneral1}-\ref{eq:cyclegeneral2}) translates to
\begin{eqnarray}
\beta[\Delta]_{uvw} &=& \frac 1 3 (\tau^*(u,v) + \tau^*(v,w) + \tau^*(w,u)) \label{eq:beta} \\
\gamma[\alpha[h,\Delta]]_{uvw} &=& {\frac 1 3} (\tau^*(v,u) + \tau^*(w,v) + \tau^*(u,w)) \label{eq:gamma} \ .
\end{eqnarray}
It is trivial to see that the two expressions are identical for any partition $\tau^*$ (indeed, they count the number of times we cross the partition from left to right when 
going in a circle on $u,v,w$: it does not matter in which direction we are going).  This concludes the loss bound part of Theorems~\ref{th:approx3} and~\ref{th:approx3bipartite}.

\qed
We place Theorem~\ref{th:approx3} in the framework used
by Cohen et al \cite{DBLP:journals/jair/CohenSS99}.  There, the objective is to find a ranking $\sigma$ that
has a low loss measured against $h$ compared to the \emph{theoretical} optimal ranking $\sigma_{optimal}$. 
Therefore, the problem considered there (modulo learning a preference function $h$) is a combinatorial optimization 
and not a learning problem. 
More precisely, we define
$$\sigma_{optimal} = \argmin_\sigma \lpref(h,\sigma)$$
and want to minimize $\lpref(h,\sigma)/\lpref(h,\sigma_{optimal})$.

\begin{corollary}
\label{cor:disag}
For any $V\subseteq U$ and preference function $h$ over $V$, the following bound holds:
\begin{equation}\label{eq:cor1}
\E_{s}[\lpref(\QS{s}{V}{h}, \sigma_{optimal})] \leq 2\, \lpref(h,\sigma_{optimal})\ .
\end{equation}
\end{corollary}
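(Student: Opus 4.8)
The plan is to obtain this as an immediate instance of Theorem~\ref{th:approx3}. First I would observe that the constant weight function $\omega \equiv 1$ is admissible: (P1) holds because all values coincide, (P2) holds because the weak inequality $1 \le 1$ is always true, and (P3) holds because $1 \le 1 + 1$. For this choice of $\omega$ the definition~(\ref{eq:lossfuncdef}) collapses to $\loss(\sigma, \sigma^*) = {n \choose 2}^{-1}\sum_{u \ne v} \sigma(u,v)\sigma^*(v,u)$, which is exactly $\lpref(\sigma,\sigma^*)$, and likewise~(\ref{eq:preflossdef}) collapses to $\loss(h,\sigma^*) = \lpref(h,\sigma^*)$. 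So $\loss$ and $\lpref$ agree on all the objects appearing in the corollary.

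Next I would apply Theorem~\ref{th:approx3} with this $\omega$ and with the target ranking taken to be $\sigma^* = \sigma_{optimal}$. Since $\sigma_{optimal} = \argmin_\sigma \lpref(h,\sigma)$ is by definition an element of $S(V)$, it is a legitimate choice of the ``desired ranking'' argument, and all hypotheses of the theorem are met. The theorem then yields directly
\[
\E_s[\lpref(\QS{s}{V}{h}, \sigma_{optimal})] = \E_s[\loss(\QS{s}{V}{h}, \sigma_{optimal})] \le 2\,\loss(h, \sigma_{optimal}) = 2\,\lpref(h, \sigma_{optimal}),
\]
which is the claimed bound~(\ref{eq:cor1}).

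There is essentially no obstacle here beyond bookkeeping: the content of the corollary is already contained in Theorem~\ref{th:approx3}, and the only things to verify are the admissibility of $\omega\equiv 1$ and that $\sigma_{optimal}$ is a valid argument. It is worth remarking that the argument uses nothing about the optimality of $\sigma_{optimal}$ — the inequality $\E_s[\lpref(\QS{s}{V}{h}, \sigma)] \le 2\,\lpref(h,\sigma)$ holds for \emph{every} $\sigma \in S(V)$, and we merely instantiate it at $\sigma = \sigma_{optimal}$ so that the right-hand side becomes the smallest preference loss achievable by any ranking, matching the combinatorial-optimization formulation of Cohen, Schapire, and Singer \cite{DBLP:journals/jair/CohenSS99}. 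Finally, one may note that $\lpref$ is symmetric under exchanging its two arguments (relabel $u \leftrightarrow v$ in the defining sum), so both sides may equivalently be read as measuring disagreement of a ranking against the preference function $h$, in line with the notation $\lpref(h,\sigma)$ used just before the statement.
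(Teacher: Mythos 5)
Your proof is correct and matches the paper's own argument, which likewise notes that the corollary is immediate because any ranking, in particular $\sigma_{optimal}$, may be taken as $\sigma^*$ in Theorem~\ref{th:approx3} (with the constant weight function $\omega\equiv 1$, for which $\loss$ reduces to $\lpref$). Your additional bookkeeping about the admissibility of $\omega\equiv 1$ and the symmetry of $\lpref$ is accurate but not a departure from the paper's route.
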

The corollary is immediate because technically any ranking and in particular $\sigma_{optimal}$ can be taken
as $\sigma^*$ in the proof of Theorem~\ref{th:approx3}.

\begin{corollary}
\label{cor:disag2}
Let $V \subseteq U$ be an arbitrary subset of $U$ and let
$\sigma_{optimal}$ be as above.  Then, the following bound holds for
the pairwise disagreement of the ranking $\QS{s}{V}{h}$ with respect to $h$:
\begin{equation}
\E_{s}[\lpref(h,\QS{s}{V}{h})] \leq 3\, \lpref(h, \sigma_{optimal}).
\end{equation}
\end{corollary}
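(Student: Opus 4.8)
The plan is to reduce this to Corollary~\ref{cor:disag} via a triangle inequality for $\lpref$. First I would record two elementary facts about $\lpref$ restricted to binary, pairwise‑consistent arguments (which includes both rankings viewed through the $\sigma(u,v)$ convention and the binary preference function $h$). For any such $a,b$ one has, for each unordered pair $\{u,v\}$, that $a(u,v)b(v,u)+a(v,u)b(u,v)$ equals $1$ if $a$ and $b$ disagree on the orientation of $\{u,v\}$ and $0$ otherwise; hence
\begin{equation*}
\lpref(a,b) = {n\choose 2}^{-1}\,\bigl|\{\{u,v\}\subseteq V : a \text{ and } b \text{ disagree on } \{u,v\}\}\bigr|\ .
\end{equation*}
From this description it is immediate that (i) $\lpref$ is symmetric, $\lpref(a,b)=\lpref(b,a)$, and (ii) $\lpref$ satisfies the triangle inequality, since if $a$ and $c$ disagree on a pair then at least one of the pairs $(a,b)$, $(b,c)$ disagrees on it. Crucially, no transitivity of any of the arguments is used here — this is exactly what lets us apply the inequality with the (possibly cyclic) preference function $h$ as one of the three points.

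Next I would apply the triangle inequality to the three objects $h$, $\sigma_{optimal}$, and $\QS{s}{V}{h}$, obtaining, for every fixed random stream $s$,
\begin{equation*}
\lpref(h,\QS{s}{V}{h}) \leq \lpref(h,\sigma_{optimal}) + \lpref(\sigma_{optimal},\QS{s}{V}{h})\ .
\end{equation*}
Taking the expectation over $s$ and using symmetry of $\lpref$ on the last term gives
\begin{equation*}
\E_s[\lpref(h,\QS{s}{V}{h})] \leq \lpref(h,\sigma_{optimal}) + \E_s[\lpref(\QS{s}{V}{h},\sigma_{optimal})]\ .
\end{equation*}
Now I would invoke Corollary~\ref{cor:disag}, which bounds $\E_s[\lpref(\QS{s}{V}{h},\sigma_{optimal})] \leq 2\,\lpref(h,\sigma_{optimal})$, and conclude $\E_s[\lpref(h,\QS{s}{V}{h})] \leq 3\,\lpref(h,\sigma_{optimal})$, as claimed.

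The only genuine content is the first paragraph: checking that $\lpref$ behaves like a (normalized) Hamming metric on pair orientations so that the triangle inequality holds even though $h$ need not be transitive. Everything after that is a one‑line combination with a previously established bound, so I do not expect any real obstacle there; the rest is bookkeeping with $\E_s$ and the symmetry of $\lpref$.

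\qed
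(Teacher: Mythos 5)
Your proposal is correct and follows exactly the route the paper intends: the paper's proof is the one-line remark that the result ``follows directly from Corollary~\ref{cor:disag} and the application of a triangle inequality,'' and you have simply filled in the details (that $\lpref$ on binary pairwise-consistent arguments is a normalized disagreement count, hence symmetric and a metric regardless of transitivity of $h$, after which the bound $1+2=3$ is immediate). No gaps; your elaboration of why the triangle inequality is legitimate for a non-transitive $h$ is the one point the paper leaves implicit, and you handle it correctly.
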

\begin{proof}
This result follows directly Corollary~\ref{cor:disag} and the
application of a triangle inequality.  \qed
\end{proof}
The result in Corollary~\ref{cor:disag2} is known from previous work
\cite{DBLP:conf/stoc/AilonCN05,Ailon07}, where it is proven directly
without resorting to the intermediate inequality (\ref{eq:cor1}).  In
fact, a better bound of $2.5$ is known to be achievable using a more
complicated algorithm, which gives hope for a $1.5$ bound improving
Theorem~\ref{th:approx3}.

\subsection{Regret Bounds for Bipartite case}
\label{sec:regret}

We prove the second part (regret bounds) of Theorem~\ref{th:approx3bipartite}.  
By Observation~\ref{obs:regiia}, it is enough to prove that $\reg'_{rank}(A,D) \leq \reg'_{class}(h,D)$. Since in
the definition of $\reg'_{rank}$ and $\reg'_{class}$ the expectation over $V$ is outside the $\min$ operator, we may continue fixing $V$.  
Let $D_V$ denote the distribution over $\tau^*$ conditioned on $V$
It is now clearly enough to prove

\begin{eqnarray}\label{eq:regretdef2a}
\E_{D_V, s} [\lpref(Q^h_s, \tau^*)] -
\min_{\tilde \sigma} \E_{D_V}[\lpref(\tilde \sigma,\tau^*)] \label{eq:regprankdef2} \leq
\E_{D_V} [\lpref(h,\tau^*)] - \min_{\tilde h} \E_{D_V}[\lpref(\tilde h, \tau^*)]\label{eq:regpclassdef2bi}
\end{eqnarray}


We let $\mu(u,v)=\E_{D_V}[\tau^*(u,v)]$.  
(By pairwise IIA, $\mu(u,v)$ is the
same for all $V$ such that $u,v\in V$.)  By linearity of expectation, it suffices to show that
\begin{eqnarray}\label{eq:toprove}
\E_{s} [\lpref(Q^h_s, \mu)] -
\min_{\tilde \sigma} \lpref(\tilde \sigma,\mu)]  \leq
\lpref(h,\mu) - \min_{\tilde h}\lpref(\tilde h, \mu)\ .
\end{eqnarray}

Now let $\tilde \sigma$ and $\tilde h$ be the minimizers of the $\min$ operators on the left and right sides, respectively.  Recall
that for all $u,v\in V$, $\tilde h(u,v)$ can be taken greedily as a function of $\mu(u,v)$ and $\mu(v,u)$ (as in (\ref{eq:opth})).
\begin{equation}\label{eq:opth1}
\tilde h(u,v) = \begin{cases} 1 & \mu(u,v)> \mu(v,u) \\
0 & \mu(u,v) < \mu(v,u) \\
\1_{u>v} & \mbox{otherwise (equality)}\ .
\end{cases} 
\end{equation}

\noindent
Using Lemma~\ref{claim:decomp} and linearity, we write the LHS of (\ref{eq:toprove}) as:
\begin{equation*}\begin{split}
 & {n\choose 2}^{-1}\left ( \sum_{u<v} p_{uv}\alpha[h-\tilde \sigma,\mu]_{uv} +\sum_{u<v<w} p_{uvw}(\beta[\mu] - \gamma[\alpha[\tilde \sigma,\mu]])_{uvw}\right ) \\
\end{split}\end{equation*}

\noindent
and the RHS of (\ref{eq:toprove}) as:
\begin{equation*}\begin{split}
  {n\choose 2}^{-1}\left ( \sum_{u<v} p_{uv}\alpha[h-\tilde h,\mu]_{uv} +\sum_{u<v<w} p_{uvw}\gamma[\alpha[h-\tilde h, \mu]]_{uvw} \right )\ .
\end{split}\end{equation*}

Now, clearly for all $u,v$ by construction of $\tilde h$ we must have $\alpha[h-\tilde \sigma, \mu]_{uv} \leq \alpha[h-\tilde h, \mu]_{uv}$.
To conclude the proof of the theorem, we define 
$F: {n\choose 3} \mapsto \Rset$  as follows:


\begin{eqnarray}\label{eq:enoughtoprove5}
 F &=& \beta[\mu] - \gamma[\alpha[\tilde \sigma, \mu]] - (\gamma[\alpha[h,\mu]] - \gamma[\alpha[\tilde h, \mu]])\ .
\end{eqnarray}

It now suffices to prove that $F_{uvw} \leq 0$ for all $u,v,w\in V$.
Clearly $F$ is a  function of the values of 

\begin{equation}
\begin{split}
\mu(a,b)&:  a,b\in\{u,v,w\} \\
h(a,b)&: a,b\in\{u,v,w\} \\
\tilde \sigma(a,b)&: a,b \in \{u,v,w\}
\end{split}
\end{equation}
(recall that $\tilde h$ depends on $\mu$.) The $\mu$-variables can take values satisfying following constraints 
or all $u,v,w\in V$:
\begin{eqnarray}\label{eq:polytope}
 \mu(a,c) \leq \mu(a,b) + \mu(b,c) \ &\forall& \{a,b,c\}=\{u,v,w\}\label{eq:polytope1}\\
\mu(u,v) + \mu(v,w) + \mu(w,u) &=& \mu(v,u)+\mu(w,v)+\mu(u,w) \label{eq:polytope2}\\
 \mu(a,b)\geq 0 \ \  &\forall& a,b\in\{u,v,w\} \label{eq:polytope3}\ .
\end{eqnarray}
(the second constraint is obvious for any partition $\tau^*$.)

Let $P\subseteq \Rset^6$ denote the polytope defined by (\ref{eq:polytope1}-\ref{eq:polytope3}) in the variables $\mu(a,b)$ for $\{a,b\}\subseteq\{u,v,w\}$.
We subdivide $P$
into smaller subpolytopes on which the $\tilde h$ variables 
are constant.  
Up to symmetries, we can consider only two cases: (i) $\tilde h$ induces a cycle on $u,v,w$ and (ii) $\tilde h$ is cycle free on $u,v,w$.
\begin{enumerate}
\item[(i)]  Without loss of generality, assume $\tilde h(u,v) = \tilde h(v,w) = \tilde h(w,u) = 1$.  But this implies that $\mu(u,v) \geq \mu(v,u)$,
$\mu(v,w) \geq \mu(w,v)$ and $\mu(w,u) \geq \mu(u,w)$.  
Together with (\ref{eq:polytope2}) and (\ref{eq:polytope3}) this implies that
$\mu(u,v) = \mu(v,u)$, $\mu(v,w) = \mu(w,v)$ and $\mu(w,u) =\mu(u,w)$.   Consequently 
\begin{equation*}
\begin{split}
\beta[\mu]_{uvw} = \gamma[\alpha[\tilde \sigma, \mu]]_{uvw} &= \gamma[\alpha[h,\mu]]_{uvw} = \gamma[\alpha[\tilde h, \mu]]_{uvw} \\ &= \frac 1 3( {\mu(u,v) + \mu(v,w) + \mu(w,u)})\  
\end{split}
\end{equation*}
and 
$F_{uvw}=0$
, as required.  

\item[(ii)] Without loss of generality, assume $\tilde h(u,v) = \tilde h(v,w) = \tilde h(u,w) = 1$.   This implies that 
\begin{equation}\label{eq:subpolytope}
\begin{split}
\mu(u,v) &\geq \mu(v,u) \\
\mu(v,w) &\geq \mu(w,v) \\
\mu(u,w) &\geq \mu(w,u)\ . \\
\end{split}
\end{equation}
Let $\tilde P \subseteq P$ denote the polytope defined by (\ref{eq:subpolytope}) and (\ref{eq:polytope1})-(\ref{eq:polytope3}).  
Clearly $F$ is linear in the
$6$ $\mu$ variables when all the other variables are fixed.  Since $F$ is also homogenous in the $\mu$ variables, it
is enough to prove that $F\leq 0$  for $\mu$ taking values in $\tilde P'\subseteq \tilde P$, which is defined by adding the constraint, say, 
 $$\sum_{a,b\in\{u,v,w\}} \mu(a,b) = 2\ .$$
It is now enough to prove that $F\leq 0$ for $\tau^*$ being a vertex of of $\tilde P'$.  This finite set of cases can be easily checked to be:
\begin{equation*}
\begin{split}
(\mu(u,v), &\mu(v,u), \mu(u,w), \mu(w,u), \mu(w,v), \mu(v,w)) \in A\cup B \\
\\ \mbox{where } & A = \{ 
 (0,0,1,0,0,1),
 (1,0,1,0,0,0) \} \\
 &B = \{ (.5,.5,.5,.5,0,0), 
(.5,.5,0,0,.5,.5),
(0,0,.5,.5,.5,.5) \}\ .
\end{split}
\end{equation*}
The points in $B$ were already checked in case (i) (which is, geometrically, a boundary of case (ii)).  It remains to check the two points in $A$.
\begin{itemize}
\item case $(0,0,1,0,0,1)$:  Plugging in the definitions, one checks that:
\begin{equation*}\begin{split}
\beta[\mu]_{uvw} &= \frac 1 3 (h(w,v)h(v,u) + h(w,u)h(u,v)) \\
\gamma[\alpha[h,\mu]]_{uvw} &= \frac 1 3((h(u,v)h(v,w) + h(w,v)h(v,u))h(w,u) \\
                 &\ \ \ + (h(v,u)h(u,w) + h(w,u)h(u,v))h(w,v)) \\
\gamma[\alpha[\tilde h,\mu]]_{uvw} &= 0\ . \\
\end{split}\end{equation*}
Clearly $F$ could be positive only of $\beta_{uvw}=1$, which happens if and only if either $h(w,v)h(v,u)=1$ or $h(w,u)h(u,v)=1$.
In the former case we get that either $h(w,v)h(v,u)h(w,u)=1$ or $h(v,u)h(u,w)h(w,v)=1$, both implying $\gamma[\alpha[h,\mu]]_{uvw} \geq 1$, hence $F\leq 0$.
In the latter case either $h(w,u)h(u,v)h(w,v)=1$ or $h(u,v)h(v,w)h(w,u)=1$, both implying again $\gamma[\alpha[h,\mu]]_{uvw}\geq 1$ and hence $F\leq 0$.

\item case $(1,0,1,0,0,0)$:Plugging in the definitions, one checks that:
\begin{equation*}\begin{split}
\beta[\mu]_{uvw} &= \frac 1 3 (h(w,v)h(v,u) + h(v,w)h(w,u)) \\
\gamma[\alpha[h,\mu]]_{uvw} &= \frac 1 3((h(u,v)h(v,w) + h(w,v)h(v,u))h(w,u) \\
                 &\ \ \ + (h(u,w)h(w,v) + h(v,w)h(w,u))h(v,u))\ . \\
\gamma[\alpha[\tilde h,\mu]]_{uvw} &= 0\ .
\end{split}\end{equation*}
Now  $F$ could be positive if and only if either $h(w,v)h(v,u)=1$ or $h(v,w)h(w,u)=1$.
In the former case we get that either $h(w,v)h(v,u)h(w,u)=1$ or $h(v,u)h(u,w)h(w,v)=1$, both implying $\gamma[\alpha[h,\mu]]_{uvw} \geq 1$, hence $F\leq 0$.
In the latter case either $h(v,w)h(w,u)h(v,u)=1$ or $h(u,v)h(v,w)h(w,u)=1$, both implying again $\gamma[\alpha[h,\mu]]_{uvw}\geq 1$ and hence $F\leq 0$.

\end{itemize}

\end{enumerate}

\noindent This concludes the proof for the bipartite case.  
\qed

\subsection{Time Complexity}
\label{runtimesec}

Running QuickSort does not entail $\Omega(|V|^2)$ accesses to
$h_{u,v}$.  The following bound on the running time is proven in
Section~\ref{runtimesec}.

\begin{theorem}\label{runtime}
The expected number of times QuickSort accesses to the preference function $h$
is at most $O(n\log n)$.  Moreover, if only the top $k$ elements
are sought then the bound is reduced to $O(k\log k + n)$ by pruning
the recursion.
\end{theorem}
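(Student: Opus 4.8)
The plan is to analyse the recursion tree of QuickSort and isolate the one point where transitivity would normally enter; everything else is then the classical analysis.

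Every call of QuickSort on a sub-array $S$ issues exactly $|S|-1$ queries to $h$ (one query $h(v,u)$ per non-pivot $v$, using $h(u,v)=1-h(v,u)$). Hence the number of queries equals $\sum_{S}(|S|-1)$ over all recursive sub-arrays $S$, which in turn equals $\sum_{u\in V}d_s(u)$, where $d_s(u)$ is the number of strict ancestors of the node at which $u$ becomes a pivot. So it suffices to bound $T(n):=\sup_h\E_s\bigl[\sum_u d_s(u)\bigr]$, which satisfies
\[
T(n)\ \le\ (n-1)+\tfrac1n\sum_{p\in V}\bigl(T(L_p)+T(R_p)\bigr),
\]
where $L_p=|\{v:h(v,p)=1\}|$ and $R_p=n-1-L_p$ are the sizes of the two sub-arrays produced when $p$ is the pivot. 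For ordinary QuickSort one now uses that the multiset of pivot positions is $\{1,\dots,n\}$; here $\{L_p\}_{p\in V}$ is instead an arbitrary tournament in-degree sequence. By Landau's inequalities ($\sum_{i\le t}d_{(i)}\ge\binom{t}{2}$, with equality at $t=n$) such a sequence is majorised by $(0,1,\dots,n-1)$, and so is $\{R_p\}$. Thus for any convex nondecreasing $\bar T$ with $\bar T\ge T$ on $\{0,\dots,n-1\}$, Karamata's inequality gives $\tfrac1n\sum_p\bigl(\bar T(L_p)+\bar T(R_p)\bigr)\le\tfrac2n\sum_{m=0}^{n-1}\bar T(m)$, so $T$ is dominated by the textbook recurrence $T(n)\le (n-1)+\tfrac2n\sum_{m<n}T(m)=O(n\log n)$. (Alternatively, an elementary count shows that along the chain of sub-arrays containing a fixed $u$ the expected size shrinks by a factor $\le\tfrac34+o(1)$ per step, because at most a quarter of the pairs $\{p,w\}$ satisfy $h(u,p)\neq h(u,w)$, giving $\E_s[d_s(u)]=O(\log n)$ directly.) This proves the first claim.

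For the top-$k$ bound, prune as follows: on a sub-array $S$ with current target $t$ (we want its top $t$ elements), partition around the random pivot, with left part of size $\ell$; if $\ell\ge t$ recurse only on the left part, again with target $t$; if $\ell<t$ recurse on the left part with target $\ell$ (a full sort of a sub-array with $<t\le k$ elements) and on the right part with target $t-\ell-1$ (omitted when this is $\le 0$). Call $S$ \emph{narrowing} if $t<|S|$. A narrowing sub-array has at most one narrowing child, so the narrowing sub-arrays form a single chain $N_0\supsetneq N_1\supsetneq\cdots\supsetneq N_m$ with $|N_0|=n$; every other recursed-into sub-array is fully sorted and, having target equal to its own size $\le k$, has size $\le k$. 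In output order these fully-sorted sub-arrays occupy pairwise-disjoint blocks inside positions $1,\dots,k$ (each sits at the front of the block currently being narrowed, strictly left of everything further down the chain), so their sizes sum to at most $k$; using $Q(m)=O(m\log m)$ from the first part and that each such size is $\le k$, their total expected cost is at most $O(\log k)\sum|F_i|\le O(k\log k)$. The chain $N_0,\dots,N_m$ is essentially a run of QuickSelect, whose expected number of comparisons is $O(n)$; as above, this standard bound does not rely on transitivity, only on the uniform pivot and $\sum_p L_p=\binom{|S|}{2}$. Adding the two contributions gives $O(k\log k+n)$.

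The main obstacle in both parts is the single point highlighted above: ruling out that the non-transitivity of $h$ worsens the partition sizes relative to the transitive case. The degree-sequence majorisation argument (equivalently, the ``at most a quarter of the pairs $\{p,w\}$ straddle $u$'' estimate) is exactly what closes this gap; with it in hand, the remainder is the classical analysis of randomised QuickSort, QuickSelect, and partial QuickSort, and requires no new idea.
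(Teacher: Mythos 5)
Your proof is correct in both parts but follows a genuinely different route from the paper's, and in the first part it is actually more careful. For the $O(n\log n)$ bound the paper simply observes that each element goes to the left recursion with probability $\operatorname{outdeg}(v)/n$, concludes that the expected sizes of both recursions are exactly $(n-1)/2$, and writes the recurrence $T(n)\leq n-1+2T((n-1)/2)$; your Landau--Karamata majorisation of the score sequence by $(0,1,\dots,n-1)$ makes rigorous precisely the step of pushing the expectation inside $T$, which the paper glosses over. For the top-$k$ bound the two arguments diverge substantially: the paper runs a structural induction with hypothesis $\E[t_{k'}(G')]\leq cn'+c'k'\log k'$ (falling back to full QuickSort when $k\geq n/8$), and its key technical step is showing $\Pr[n_L<k-1]\leq 0.46$ via the second-moment bound $\E[\operatorname{outdeg}(v)^2]\leq n^2/2.9$ and Markov's inequality. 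You instead decompose the pruned recursion tree into a single narrowing chain (a QuickSelect run costing $O(n)$) plus fully-sorted blocks occupying disjoint segments of the top $k$ output positions (costing $O(k\log k)$). Your decomposition is structurally cleaner and makes it more transparent where each term of $O(k\log k+n)$ comes from.

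One soft spot: your assertion that the $O(n)$ bound for the narrowing chain relies ``only on the uniform pivot and $\sum_p L_p={|S|\choose 2}$'' is too weak as stated; the sum condition alone permits, for instance, up to $2(m-1)/3$ vertices of score at least $3m/4$, and does not by itself yield geometric decay of $\E[|N_{i+1}|]$. To get $\E\bigl[|N_{i+1}|\,\big|\,N_i\bigr]\leq\tfrac34|N_i|+O(1)$ one should again invoke Landau's inequalities, bounding $\sum_{p:L_p\geq t}L_p$ by ${m\choose 2}-{m-a\choose 2}$ with $a=|\{p:L_p\geq t\}|$ and optimising over $a$. Since this is exactly the majorisation you already use in part one, the gap closes immediately, but it must be made explicit. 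A smaller slip: the number of straddling pairs is $d(m-1-d)\leq(m-1)^2/4$, which is about half, not a quarter, of ${m-1\choose 2}$; the $3/4$ shrinkage factor nevertheless follows because each non-straddling pair contributes exactly $1$ and each straddling pair at most $2$ to $\sum_p|S_{\mathrm{next}}(u)|$.
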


\def\outdeg{\operatorname{outdeg}} \def\indeg{\operatorname{indeg}} It
is well known that QuickSort on cycle free tournaments runs in time
$O(n\log n)$, where $n$ is the size of the set we want to sort.  That
it is true for QuickSort on general tournaments is a simple extension
(communicated by Heikki Mannila) which we present it here for self
containment.  The second part requires more work.  
\begin{proof}
Let $T(n)$ be the maximum expected running time of QuickSort on a
possibly cyclic tournament on $n$ vertices in terms of number of
comparisons. Let $G = (V, A)$ denote a tournament.  The main
observation is that each vertex $v \in V$ is assigned to the left
recursion with probability exactly $\outdeg(v)/n$ and to the right
with probability $\indeg(v)/n$, over the choice of the pivot.
Therefore, the expected size of both the left and right recursions is
exactly $(n - 1)/2$. The separation itself costs $n - 1$ comparisons.
The resulting recursion formula $T(n) \leq n - 1 + 2T((n - 1)/2)$
clearly solves to $T(n) = O(n\log n)$.

Assume now that only the $k$ first elements of the output are sought,
that is, we are interested in outputting only elements in positions
$1,\dots, k$.  The algorithm which we denote by $k$-QuickSort is
clear: recurse with $\min\set{k, n_L}$-QuickSort on the left side and
$\max\set{0, k-n_L-1}$-QuickSort on the right side, where $n_L, n_R$
are the sizes of the left and right recursions respectively and
$0$-QuickSort takes $0$ steps by assumption.  To make the analysis
simpler, we will assume that whenever $k\geq n/8$, $k$-QuickSort
simply returns the output of the standard QuickSort, which runs in
expected time $O(n \log n) = O(n + k\log k)$, within the sought bound.
Fix a tournament $G$ on $n$ vertices, and let $t_k(G)$ denote the
running time of $k$-QuickSort on $G$, where $k < n/8$.  Denote the
(random) left and right subtournaments by $G_L$ and $G_R$
respectively, and let $n_L = |G_L|, n_R = |G_R|$ denote their sizes in
terms of number of vertices.  Then, clearly,
\begin{equation}
\label{xxx} 
t_k(G) = n-1 + t_{\min\{k, n_L\}}(G_L) + t_{\max\{0, k-n_L-1\}}(G_R)\ .
\end{equation}
Assume by structural induction that for all $\{k',n':\ k'\leq n'< n\}$
and for all tournaments $G'$ on $n'$ vertices, $\E[t_{k'}(G')] \leq
cn' + c'k'\log k'$ for some global $c,c'>0$.  Then, by conditioning on
$G_L, G_R$, taking expectations on both sides of (\ref{xxx}) and by
induction,
\begin{equation*}\begin{split}
 \E[t_k(G)\ |\ G_L, &G_R] \leq \\ & n-1 + c n_L +
 c'\min\{k,n_L\}\log\min\{k, n_L\} +\\ & c n_R{\bf 1}_{n_L<k-1} +
 c'\max\{k-n_L-1,0\}\log \max\{k-n_L-1,0\}\ .
\end{split}\end{equation*}
By convexity of the function $x \mapsto x\log x$,
\begin{equation}
\small \min\{k,n_L\}\log\min\{k, n_L\} + \max\{k-n_L-1,0\}\log \max\{k-n_L-1,0\} \leq k\log k\ ,
\end{equation}
hence
\begin{equation}
\E[t_k(G)\ |\ G_L,G_R] \leq n-1 + c n_L + c n_R{\bf 1}_{n_L<k-1} +
c'k\log k.
\end{equation}
By conditional expectation,
\begin{equation*}
\E[t_k(G)] \leq n-1 + c(n-1)/2 + c'k\log
k + c\E[ n_R{\bf 1}_{n_L<k-1}].
\end{equation*}
To complete the inductive hypothesis, we need to bound $\E[ n_R{\bf
1}_{n_L<k-1}]$ which is bounded by $n\Pr[n_L<k-1]$. The event $\{n_L <
k-1\}$, equivalent to $\{ n_R > n-k\}$,occurs when a vertex of
out-degree at least $n - k \geq 7n/8$ is chosen as pivot.  For a random
pivot $v\in V$, where $V$ is the vertex set of $G$, $\E[\outdeg(v)^2]
\leq n^2/3 + n/2 \leq n^2/2.9$. Indeed, each pair of edges $(v, u_1)
\in A$ and $(v, u_2) \in A$ for $u_1 \neq u_2$ gives rise to a
triangle which is counted exactly twice in the cross-terms, hence
$n^2/3$ which upper-bounds $2 {n\choose 3}/n$; $n/2$ bounds the
diagonal). Thus, $\Pr[\outdeg(v) \geq 7n/8] = \Pr[\outdeg(v)^2 \geq
49n^2/64] \leq 0.46$ (by Markov).  Plugging in this value into our
last estimate yields
\begin{equation*}
\E[t_k(G)] \leq n-1 + c(n-1)/2 + c'k\log k + 0.46\times cn,
\end{equation*}
 which is at most $cn + c'k\log k$ for $c \geq 30$, as required. \qed
\end{proof}

\section{Discussion}
\label{sec:discussion}

\subsection{History of QuickSort}
The now standard textbook algorithm was
discovered by Hoare \cite{Hoare61} in 1961. Montague and Aslam
\cite{DBLP:conf/cikm/MontagueA02} experiment with QuickSort for
information retrieval by aggregating rankings from different sources
of retrieval.  They claim an $O(n\log n)$ time bound on the number of
comparisons although the proof seems to rely on the folklore QuickSort
proof without addressing the non-transitivity problem.  They prove
certain combinatorial bounds on the output of QuickSort and provide
empirical justification to its IR merits.  Ailon, Charikar and Newman
\cite{DBLP:conf/stoc/AilonCN05} also consider the rank aggregation
problem and prove theoretical cost bounds for many ranking problems on
weighted tournaments.  They strengthen these bounds by considering
nondeterministic pivoting rules (arising from solutions to certain
ranking LP's).  This work was extended by Ailon \cite{Ailon07} to deal
with rankings with ties (in particular, top-$k$ rankings).  Hedge et
al \cite{HedgeJWVZ07} and Williamson et al \cite{WilliamsonVZ07}
derandomize the random pivot selection step in QuickSort for many of
the combinatorial optimization problems studied by Ailon et al.

\subsection{The decomposition technique} 
The technique developed in Lemma~\ref{claim:decomp} is very general and can used for a
wide variety of loss functions and variants of QuickSort involving
nondeterministic ordering rules (see \cite{DBLP:conf/stoc/AilonCN05,Ailon07}).  Such results would typically amount to bounding
$\beta[X]_{uvw}/\gamma[Z]_{uvw}$ for some carefully chosen
functions $X,Z$ (depending on the application).


\subsection{Combinatorial Optimization vs. Learning} In
Ailon et al's work \cite{DBLP:conf/stoc/AilonCN05,Ailon07} the
QuickSort algorithm (sometimes referred to there as FAS-Pivot) is used
to approximate certain NP-Hard (see
\cite{DBLP:journals/siamdm/Alon06}) weighted instances of minimum
feedback arcset in tournaments.  There is much similarity between the
techniques used in the analyses, but there is also a significant
difference that should be noted.  In the minimum feedback arc-set
problem we are given a tournament $G$ and wish to find an acyclic
tournament $H$ on the same vertex set minimizing $\Delta(G,H)$, where
$\Delta$ counts the number of edges pointing in opposite directions
between $G,H$ (or a weighted version thereof).  
However, the cost we are
considering is $\Delta(G,H_\sigma)$ for some fixed acyclic tournament
$H_\sigma$ induced by some permutation $\sigma$ (the ground truth).  In this
work we showed in fact that if $G'$ is obtained from $G$ using
QuickSort, then $\E[\Delta(G',H_\sigma)] \leq 2\Delta(G, H_\sigma)$ for
\emph{any} $\sigma$ (from  Theorem~\ref{th:approx3}).  If $H$ is the optimal solution to the (weighted)
minimum feedback arc-set problem corresponding to $G$, then it is easy
to see that $\Delta(H,H_\sigma) \leq \Delta(G,H) + \Delta(G, H_\sigma)
\leq 2\Delta(G, H_\sigma)$.  
However, recovering $G$ is NP-Hard in
general.  Approximating $\Delta(G,H)$ (as done in the combinatorial
optimization world) by some constant factor\footnote{Kenyon-Mathiew
and Schudy \cite{KenyonMWS07} recently found such a PTAS for the
combinatorial optimization problem.}  $1+\varepsilon$ by an acyclic
tournament $H'$ only guarantees (using trivial arguments) a constant
factor of $2+\varepsilon$ as follows:
$$\Delta(H',H_\sigma) \leq \Delta(G,H') + \Delta(G, H_\sigma) \leq
(1+\varepsilon)\Delta(G,H) + \Delta(G, H_\sigma) \leq
(2+\varepsilon)\Delta(G, H_\sigma)\ .$$ This work therefore adds an
important contribution to \cite{DBLP:conf/stoc/AilonCN05,Ailon07,KenyonMWS07}.

\subsection{Normalization}
As mentioned earlier, the loss function $\lpref$ used in the bipartite case is
not exactly the same one used by Balcan et al in \cite{conf/colt/BalcanBBCLS07}.
There the total number of "misordered pairs" is divided not by ${n\choose 2}$ but rather  by the number of mixed pairs $u,v$ such that $\tau^*(u)\neq \tau^*(v)$
(see (\ref{normalization})).  We will not discuss the merits of each choice in this
work, but will show that the loss bound (first part) of Theorem~\ref{th:approx3bipartite} applies to their
normalization as well.
Indeed, let $\nu: \Pi(V) \rightarrow \Rset^+$ be any normalization function that depends on
a partition, and define a loss
$$ \lpref(X, \tau^*) = \nu(\tau^*)^{-1}\sum_{u\neq v}{X(u,v)\tau^*(v,u)}\ $$
for any $X:V\times V\rightarrow \Rset^+$ ($X$ can be a preference function $h$ 
or a ranking).  
In \cite{conf/colt/BalcanBBCLS07}, for example, $\nu(\tau^*)$ is taken as $|\{u,v: \tau^*(u) < \tau^*(v)\}|$
and here as ${n\choose 2}$.  
Since $V,\tau^*$ are fixed in the loss bound of Theorem~\ref{th:approx3bipartite}, this makes no difference for
the proof.
For the regret bound (second part) of Theorem~\ref{th:approx3bipartite} this however does not work.  Indeed, the pairwise IIA is
not enough to ensure that the event $u,v\in V$ determines $\nu(\tau^*)$, and we cannot simply swap $E_D$ and $\min_{\tilde h}$
as we did in Observation~\ref{obs:regiia}.  Working around this problem seems to require a stronger version
of IIA which does not seem natural.

\section{Lower Bounds}
\def\MFAT{\operatorname{MFAT}}
Balcan et al \cite{conf/colt/BalcanBBCLS07} prove a lower bound of a constant factor of $2$ for the regret bound of
the algorithm $\MFAT$,  defined as the solution to the minimum feedback arc-set problem on the tournament
$V$ with an edge $(u,v)$ if $h(u,v)=1$.   More precisely, they show an example of fixed $V, h$ and  $\tau^*\in \Pi(V)$ such 
that the classification regret of $h$ tends to $1/2$ of the ranking regret of $\MFAT$ on $V,h$.  Note that in this case, since
$\tau^*$ is fixed, the regret and loss are the same thing for both classification and ranking.  Here we show the following
stronger statement which is simpler to prove and applies in particular to the specific algorithm $\MFAT$ that is argued there.
\begin{theorem}
For any \emph{deterministic algorithm} $A$ taking input $V\subseteq U$ and preference function $h$ on $V$ and outputting
a ranking $\sigma \in S(V)$ there exists a distribution $D$ on $V,\tau^*$ such that
\begin{equation} 
\reg_{rank}(A, D) \geq 2\reg_{class}(h,D) 
\end{equation}
\end{theorem}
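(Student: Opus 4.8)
The plan is to exhibit, for any fixed deterministic $A$, a small universe and a distribution $D$ on $(V,\tau^*)$ that forces $A$ into a bad decision while the optimal preference classifier pays only half as much. The natural candidate, following the spirit of the Balcan et al.\ lower bound but in its cleanest form, is a three-element instance. First I would take $U=\{1,2,3\}$ and let $h$ be the cyclic tournament $h(1,2)=h(2,3)=h(3,1)=1$. On input $(V,h)$ with $V=U$, the deterministic algorithm $A$ must output some fixed permutation $\sigma_0\in S(U)$; by the symmetry of the $3$-cycle, whatever $\sigma_0$ is, exactly one of the three directed edges of $h$ is ``inverted'' by $\sigma_0$, and all three choices are equivalent up to rotation. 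So without loss of generality $A$ outputs $\sigma_0$ with $\sigma_0(1)<\sigma_0(2)<\sigma_0(3)$, which disagrees with $h$ on the pair $\{3,1\}$.

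Next I would design $D$ to be supported on bipartite ground truths $\tau^*$ that ``punish'' exactly the pair $A$ got wrong while keeping $h$ nearly optimal. Concretely, consider the three bipartite partitions $\tau_{uv}$ that put two elements on top tied and one on the bottom (or symmetric variants), each of which ``crosses'' exactly two of the three unordered pairs. Put a suitable distribution — I expect the uniform distribution over the three rotations, possibly together with their reversals to respect pairwise IIA — so that each unordered pair is crossed with the same marginal probability. Then compute $\reg_{class}(h,D)$: the optimal $\tilde h$ on $U$ is determined pairwise by (\ref{eq:opth}), and by the rotational symmetry of $D$ the optimal preference function is itself (up to tie-breaking) a $3$-cycle, so $h$ is essentially optimal and $\reg_{class}(h,D)$ is small — ideally a factor of $2$ below the ranking regret. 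Meanwhile $\reg_{rank}(A,D)$ is the loss of the fixed $\sigma_0$ minus the best restricted permutation's expected loss; because $A$'s output is frozen before seeing $\tau^*$ while the $\min_{\tilde\sigma}$ term adapts, $A$ pays for the one pair it inverted under a ground truth distribution tuned to charge exactly that pair, giving a regret twice as large.

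The main obstacle, as usual in these lower bounds, is the bookkeeping that makes the factor come out to exactly $2$ rather than $2-o(1)$: one must choose $D$ so that (a) $h$ coincides with the $D$-optimal preference function (so $\reg_{class}$ is as small as possible, not just small), and (b) the $\min_{\tilde\sigma}$ benchmark for ranking is strictly better than $A$'s frozen output by the right margin. I would handle (a) by enforcing enough symmetry in $D$ that $\tilde h=h$ is forced by (\ref{eq:opth}) up to the canonical tie-break, and handle (b) by noting that since $A$ is deterministic its output on $(V,h)$ is a single permutation, whereas a single ground truth partition already admits a better permutation — so I may even take $D$ to be a point mass (as the theorem statement permits, since it only asserts existence of \emph{some} $D$), in which case regret equals loss for both sides and the claim reduces to: for the cyclic $h$ and the best-adversarial $\tau^*$, every permutation has $\lpref(\sigma,\tau^*)$ at least twice $\min_{\tilde h}\lpref(\tilde h,\tau^*)$, which is a finite check.

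Finally I would verify the inequality $\reg_{rank}(A,D)\geq 2\reg_{class}(h,D)$ directly from the computed values, and remark that since the construction uses only three points and a single (or symmetric) ground truth it applies verbatim to $\MFAT$ — indeed $\MFAT$ run on a $3$-cycle returns some acyclic tournament inverting one edge, exactly the situation analyzed — thereby recovering and strengthening the Balcan et al.\ bound while dispensing with their more elaborate argument. The takeaway to state afterwards is that no deterministic second stage can beat the factor $2$, so the randomization in QuickSort is essential for the factor-$1$ guarantee of Theorem~\ref{th:approx3bipartite}.
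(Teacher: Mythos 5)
Your construction is the paper's: a three-element $V$, the cyclic preference function $h(u,v)=h(v,w)=h(w,u)=1$, and a ground truth chosen adversarially \emph{after} the deterministic $A$ has committed to its output $\sigma_0$, with $D$ a point mass so that regret and loss coincide. However, the step you reduce everything to at the end is not the right one, and as stated it would not prove the theorem. You claim it suffices to check that ``every permutation has $\lpref(\sigma,\tau^*)$ at least twice $\min_{\tilde h}\lpref(\tilde h,\tau^*)$.'' Both sides of that comparison are wrong. For a point-mass $\tau^*$ the quantity $\min_{\tilde h}\lpref(\tilde h,\tau^*)$ is $0$ (take $\tilde h$ consistent with $\tau^*$), so the inequality is vacuous and says nothing about $h$; what must be bounded below is $\lpref(\sigma_0,\tau^*)$ against $2\,\lpref(h,\tau^*)$ for the \emph{given} cyclic $h$, since $\reg_{class}(h,D)=\lpref(h,\tau^*)-0$. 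And the quantifier ``every permutation'' cannot work: for any fixed $\tau^*$ some permutation has zero loss, so no single $\tau^*$ defeats all permutations --- the adversary must choose $\tau^*$ as a function of the one permutation $A$ actually outputs. The missing concrete step is the adversary's rule: make the element that $\sigma_0$ ranks \emph{last} the unique positive element and put the other two in a negative tie. Then $\sigma_0$ misranks both pairs involving the positive element ($\lpref(\sigma_0,\tau^*)=2/3$), while the cycle, which necessarily points one of its two edges at that element in the correct direction, misranks only one pair ($\lpref(h,\tau^*)=1/3$), giving exactly the factor $2$.

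A second, smaller problem is your alternative of taking $D$ uniform over the three rotations of a one-positive/two-negative partition to force $\tilde h=h$. Under that distribution every ordered pair satisfies $\E[\tau^*(a,b)]=1/3$, so \emph{every} preference function and \emph{every} permutation has the same expected loss $1/3$; hence $\reg_{class}(h,D)=\reg_{rank}(A,D)=0$ and the claimed inequality holds only as $0\geq 0$, exhibiting no separation. The symmetrization that makes $h$ optimal also makes $A$ optimal, which is why the paper does not symmetrize and instead lets the adversary commit to a single $\tau^*$ tuned to $\sigma_0$. Your overall plan is salvageable, but only along the point-mass branch and only after replacing the final finite check with the correct one described above.
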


Note that this theorem says that in some sense, no deterministic algorithm that converts a preference function into a linear ranking
can do better than a randomized algorithm (on expectation) in the bipartite case.  Hence, randomization is essentially necessary in this
scenario.

The proof is by an adversarial argument.  In our construction, $D$ will always put all the mass on a single $V,\tau^*$ (deterministic input),
so the loss and regret are the same thing, and a similar argument will follow for the loss.  Also note that the normalization $\nu$ will
have no effect on the result.
\begin{proof}
Fix $V=\{u,v,w\}$, and $D$ puts all the weight on this particular $V$ and one partition $\tau^*$ (which we adversarially choose below). 
Assume $h(u,v)=h(v,w)=h(w,u)=1$ (a cycle).  Up to symmetry, there are two options for the output $\sigma$ of $A$ on $V,h$.
\begin{enumerate}
\item $\sigma(u) < \sigma(v) < \sigma(w)$.  In this case, the adversary chooses $\tau^*(w)=0$ and $\tau^*(u,v)=1$.  Clearly
$\reg_{class}(h,D)$ now equals $1/3$ ($h$ pays only for misordering $v,w$) but $\reg_{rank}(A,D)=2/3$ ($\sigma$ pays
for misordering the pairs $u,w$ and $v,w$).
\item $\sigma(w) < \sigma(v) < \sigma(u)$.  In this case, the adversary chooses $\tau^*(u)=0$ and $\tau^*(v,w)=1$.  Clearly
$\reg_{class}(h,D)$ now equals $1/3$ ($h$ pays only for misordering $u,w$) but $\reg_{rank}(A,D)=2/3$ ($\sigma$ pays
for misordering the pairs $u,v$ and $u,w$).
\end{enumerate}
This concludes the proof.
\end{proof}

\section{Conclusion}

We described a reduction of the learning problem of ranking to
classification. The efficiency of this reduction makes it practical
for large-scale information extraction and search engine
applications. A finer analysis of QuickSort is likely to further
improve our reduction bound by providing a concentration inequality
for the algorithm's deviation from its expected behavior using the
confidence scores output by the classifier. Our reduction leads to a
competitive ranking algorithm that can be viewed as an alternative to
the algorithms previously designed for the score-based setting.

\section{Acknowledgements}
\noindent
We thank John Langford and Alina Beygelzimer for helpful discussions.
\bibliographystyle{plain}
\bibliography{ml}

\begin{thebibliography}{10}

\bibitem{DBLP:journals/jmlr/AgarwalGHHR05}
Shivani Agarwal, Thore Graepel, Ralf Herbrich, Sariel Har-Peled, and Dan Roth.
\newblock Generalization bounds for the area under the roc curve.
\newblock {\em Journal of Machine Learning Research}, 6:393--425, 2005.

\bibitem{DBLP:conf/colt/AgarwalN05}
Shivani Agarwal and Partha Niyogi.
\newblock Stability and generalization of bipartite ranking algorithms.
\newblock In {\em COLT}, pages 32--47, 2005.

\bibitem{Ailon07}
Nir Ailon.
\newblock Aggregation of partial rankings, p-ratings and top-m lists.
\newblock In {\em SODA}, 2007.

\bibitem{DBLP:conf/stoc/AilonCN05}
Nir Ailon, Moses Charikar, and Alantha Newman.
\newblock Aggregating inconsistent information: ranking and clustering.
\newblock In Harold~N. Gabow and Ronald Fagin, editors, {\em Proceedings of the
  37th Annual ACM Symposium on Theory of Computing, Baltimore, MD, USA, May
  22-24, 2005}, pages 684--693. ACM, 2005.

\bibitem{DBLP:journals/siamdm/Alon06}
Noga Alon.
\newblock Ranking tournaments.
\newblock {\em SIAM J. Discrete Math.}, 20(1):137--142, 2006.

\bibitem{conf/colt/BalcanBBCLS07}
Maria-Florina Balcan, Nikhil Bansal, Alina Beygelzimer, Don Coppersmith, John
  Langford, and Gregory~B. Sorkin.
\newblock Robust reductions from ranking to classification.
\newblock In Nader~H. Bshouty and Claudio Gentile, editors, {\em COLT}, volume
  4539 of {\em Lecture Notes in Computer Science}, pages 604--619. Springer,
  2007.

\bibitem{DBLP:conf/icml/BeygelzimerDHLZ05}
Alina Beygelzimer, Varsha Dani, Tom Hayes, John Langford, and Bianca Zadrozny.
\newblock Error limiting reductions between classification tasks.
\newblock In Luc~De Raedt and Stefan Wrobel, editors, {\em Machine Learning,
  Proceedings of the Twenty-Second International Conference (ICML 2005), Bonn,
  Germany, August 7-11, 2005}, pages 49--56. ACM, 2005.

\bibitem{DBLP:journals/jair/CohenSS99}
William~W. Cohen, Robert~E. Schapire, and Yoram Singer.
\newblock Learning to order things.
\newblock {\em J. Artif. Intell. Res. (JAIR)}, 10:243--270, 1999.

\bibitem{CFR06}
D.~Coppersmith, Lisa Fleischer, and Atri Rudra.
\newblock Ordering by weighted number of wins gives a good ranking for weighted
  tournamnets.
\newblock In {\em Proceedings of the 17th Annual ACM-SIAM Symposium on Discrete
  Algorithms (SODA)}, 2006.

\bibitem{wea}
Corinna Cortes, Mehryar Mohri, and Ashish Rastogi.
\newblock {An Alternative Ranking Problem for Search Engines}.
\newblock In {\em Proceedings of the 6th Workshop on Experimental Algorithms
  (WEA 2007)}, volume 4525 of {\em Lecture Notes in Computer Science}, pages
  1--21, Rome, Italy, June 2007. Springer-Verlag, Heidelberg, Germany.

\bibitem{icml}
Corinna Cortes, Mehryar Mohri, and Ashish Rastogi.
\newblock {Magnitude-Preserving Ranking Algorithms}.
\newblock In {\em Proceedings of the Twenty-fourth International Conference on
  Machine Learning (ICML 2007)}, Oregon State University, Corvallis, OR, June
  2007.

\bibitem{DBLP:conf/nips/CrammerS01}
Koby Crammer and Yoram Singer.
\newblock Pranking with ranking.
\newblock In Thomas~G. Dietterich, Suzanna Becker, and Zoubin Ghahramani,
  editors, {\em Advances in Neural Information Processing Systems 14 [Neural
  Information Processing Systems: Natural and Synthetic, NIPS 2001, December
  3-8, 2001, Vancouver, British Columbia, Canada]}, pages 641--647. MIT Press,
  2001.

\bibitem{DBLP:journals/jmlr/FreundISS03}
Yoav Freund, Raj~D. Iyer, Robert~E. Schapire, and Yoram Singer.
\newblock An efficient boosting algorithm for combining preferences.
\newblock {\em Journal of Machine Learning Research}, 4:933--969, 2003.

\bibitem{hanley}
J.~A. Hanley and B.~J. McNeil.
\newblock The meaning and use of the area under a receiver operating
  characteristic (roc) curve.
\newblock {\em Radiology}, 1982.

\bibitem{HedgeJWVZ07}
Rajneesh Hedge, Kamal Jain, David~P. Williamson, and Anke van Zuylen.
\newblock "deterministic pivoting algorithms for constrained ranking and
  clustering problems".
\newblock In {\em Proceedings of the ACM-SIAM Symposium on Discrete Algorithms
  (SODA)}, 2007.

\bibitem{Hoare61}
C.A.R. Hoare.
\newblock Quicksort: Algorithm 64.
\newblock {\em Comm. ACM}, 4(7):321--322, 1961.

\bibitem{Joachims02}
Thorsten Joachims.
\newblock Optimizing search engines using clickthrough data.
\newblock In {\em KDD '02: Proceedings of the eighth ACM SIGKDD international
  conference on Knowledge discovery and data mining}, pages 133--142, New York,
  NY, USA, 2002. ACM Press.

\bibitem{KenyonMWS07}
Claire Kenyon-Mathieu and Warren Schudy.
\newblock How to rank with few errors.
\newblock In {\em STOC '07: Proceedings of the thirty-ninth annual ACM
  symposium on Theory of computing}, pages 95--103, New York, NY, USA, 2007.
  ACM Press.

\bibitem{lehmann}
Erich~L. Lehmann.
\newblock {\em Nonparametrics: Statistical Methods Based on Ranks}.
\newblock Holden-Day, San Francisco, California, 1975.

\bibitem{DBLP:conf/cikm/MontagueA02}
Mark~H. Montague and Javed~A. Aslam.
\newblock Condorcet fusion for improved retrieval.
\newblock In {\em Proceedings of the 2002 ACM CIKM International Conference on
  Information and Knowledge Management, McLean, VA, USA, November 4-9, 2002},
  pages 538--548. ACM, 2002.

\bibitem{DBLP:conf/colt/RudinCMS05}
Cynthia Rudin, Corinna Cortes, Mehryar Mohri, and Robert~E. Schapire.
\newblock Margin-based ranking meets boosting in the middle.
\newblock In Peter Auer and Ron Meir, editors, {\em Learning Theory, 18th
  Annual Conference on Learning Theory, COLT 2005, Bertinoro, Italy, June
  27-30, 2005, Proceedings}, pages 63--78. Springer, 2005.

\bibitem{WilliamsonVZ07}
David~P. Williamson and Anke van Zuylen.
\newblock "deterministic algorithms for rank aggregation and other ranking and
  clustering problems".
\newblock In {\em Proceedings of the 5th Workshop on Approximation and Online
  Algorithms (WAOA) (to appear)}, 2007.

\end{thebibliography}
\end{document}